\newcommand\independent{\protect\mathpalette{\protect\independenT}{\perp}}
\def\independenT#1#2{\mathrel{\rlap{$#1#2$}\mkern2mu{#1#2}}}
\theoremstyle{plain}
\newtheorem{theorem}{Theorem}[section]
\theoremstyle{definition}
\newtheorem{definition}[theorem]{Definition}
\theoremstyle{remark}
\title{Generative Causal Representation Learning for Out-of-Distribution Motion Forecasting}
\author{Shayan Shirahmad Gale Bagi\\
	Department of Electrical and Computer Engineering\\
	University of Waterloo\\
	  Waterloo, ON  N2L 3G1  \\
	\texttt{sshirahm@uwaterloo.ca} \\
	\And
	Zahra Gharaee \\
	Department of Systems Design Engineering\\
	University of Waterloo\\
	Waterloo, ON  N2L 3G1 \\
	\texttt{zahra.gharaee@uwaterloo.ca} \\
 	\And
	Oliver Schulte \\
	School of Computing Science\\
	Simon Fraser University\\
	Burnaby, B.C. V5A 1S6 \\
	\texttt{oschulte@cs.sfu.ca} \\
	\And
	Mark Crowley \\
	Department of Electrical and Computer Engineering\\
	University of Waterloo\\
	Waterloo, ON  N2L 3G1 \\
	\texttt{mcrowley@uwaterloo.ca} \\
}
\date{}
\begin{document}
\maketitle

\begin{abstract}
Conventional supervised learning methods typically assume i.i.d samples and are found to be sensitive to out-of-distribution (OOD) data. We propose Generative Causal Representation Learning (GCRL) which leverages causality to facilitate knowledge transfer under distribution shifts. While we evaluate the effectiveness of our proposed method in human trajectory prediction models, GCRL can be applied to other domains as well. First, we propose a novel causal model that explains the generative factors in motion forecasting datasets using features that are common across all environments and with features that are specific to each environment. Selection variables are used to determine which parts of the model can be directly transferred to a new environment without fine-tuning. Second, we propose an end-to-end variational learning paradigm to learn the causal mechanisms that generate observations from features. GCRL is supported by strong theoretical results that imply identifiability of the causal model under certain assumptions. Experimental results on synthetic and real-world motion forecasting datasets show the robustness and effectiveness of our proposed method for knowledge transfer under zero-shot and low-shot settings by substantially outperforming the prior motion forecasting models on out-of-distribution prediction.
Our code is available at \href{https://github.com/sshirahmad/GCRL}{https://github.com/sshirahmad/GCRL}. 
\end{abstract}

% keywords can be removed
\keywords{Causality \and Variational Inference \and Representation Learning \and Motion Forecasting \and Domain Adaptation}

\section{Introduction}
Human Trajectory Prediction (HTP) is a valuable task in many applications such as infrastructure design, traffic operations, crowd abnormality detection systems, evacuation situation analysis, and autonomous vehicles \cite{motionsurvey}. It is also more challenging compared to other types of agent trajectory prediction, such as vehicles, due to the complex behaviour of humans in different environments. 
% Some of the challenges in HTP include learning meaningful features from past observations, multi-modality of trajectories, presence of social and physical interaction and obtaining physically and socially acceptable outputs \cite{motionsurvey}. 
Most of the proposed models in the literature of motion forecasting \cite{SocialGAN, Social-BiGAT, PECNet, DESIRE, STGAT, STGCNN, Trajectron++} rely on statistical inference, which has two important shortcomings. First, the learnt representations are not explainable and, therefore, are ineffective to transfer into a new environment.
%in which the behavior of humans may significantly differ from training environments. 
Secondly, statistical inference is susceptible to learn spurious correlations, which can significantly degrade the performance of the models in presence of a domain shift or measurement noise \cite{causalmotion}. In this work we aim to tackle these shortcomings from a causality perspective.

Causal representation learning has attracted much attention in various applications \cite{csg, CONTA, causal_survey, causalmotion}. Causal inference eliminates the confounding bias in predictions, \cite{causaljudea} which is a common phenomenon in motion forecasting since inputs and targets come from the same interdependent time series. 
%are intrinsically identical. 
Furthermore, models based on causal inference learn meaningful features that do not rely on spurious correlations. The coexistence of target trajectories and observation noise, is an example of spurious correlation in motion forecasting \cite{causalmotion}. Causality can also identify the generative factors in the dataset \cite{DisentangleR}. Physical laws, social norms, and motion styles are examples of generative factors in HTP. Under covariate shifts, a model that learns the causal structure of the data would require many fewer samples to adapt to the new environment because most of the modules can be reused without further training. This expectation is consistent with the Sparse Mechanism Shift hypothesis in causal representation learning \cite{causal_survey}.

To this end, we propose Generative Causal Representation Learning (GCRL) that leverages causality to increase the identifiability and robustness of current motion forecasting models. While our target application is motion forecasting, our proposed solution can be applied to domain adaptation in other applications. Oftentime, the training dataset is collected from different locations such as the ETH-UCY dataset \cite{eth-ucy1, eth-ucy2}. Considering this, we introduce a causal model that includes the invariant features, which are common across all types of environments, and variant features, which are environment-specific. Our cuasal model includes selection variables \cite{Transportability} to directly transfer some parts of the model to a new domain without fine-tuning. Second, we propose a new learning paradigm that can learn invariant and variant features simultaneously and eliminate the confounding bias using a backdoor adjustment
%surgical interventions 
\cite{causaljudea}. Previous work \cite{causalmotion} uses Invariant Risk Minimization (IRM), \cite{irm} which requires solving a very complex minimization problem. 
%may not be applicable to some models due to the complexity of the objective function. 
Third, our proposed method uses a generative process to infer latent representations, which can capture multi-modal trajectory distributions, without adding any noise to the representations as proposed in \cite{STGAT}.

\textbf{Contributions} Our key contributions can be summarized as follows:
\vspace{-1em}
    \paragraph{(1)}A learning paradigm which enables end-to-end training and eliminating confounding bias.
\vspace{-0.8em}
    \paragraph{(2)} Augmented causally factorized model to enable direct transportability and reusability of some parts of the model.
\vspace{-0.8em}
    \paragraph{(3)} A generative process to produce latent representations to tackle multi-modality of trajectories.

\section{Related Work}
\vspace{-0.6em}
\subsection{Causal Representation Learning}
\vspace{-0.6em}
As pointed out in \cite{prop}, a transfer assumption is required in domain adaptation. Most of the methods in computer vision applications have parametric distribution families, hence, it is assumed that domain shifts affect the parameters of the distributions. Similar to our approach, INFER \cite{infer} uses an augmented version of Directed Acyclic Graphs (DAGs) to represent the casual structure of data. Their method assumes that the targets are not available in the test domain and perform unsupervised domain adaptation (DA) using a conditional GAN. CAE \cite{CAE} proposes and end-to-end framework to jointly learn a model for causal structure learning and a deep autoencoder model. Markov Blanket features \cite{markovblanket} are then extracted from the learnt causal model to be used in the test domain. CSG \cite{csg} leverages the Independent Causal Mechanisms (ICM) principle \cite{causal_survey} to learn the causal model for image classification task. In CSG, covariate shift is assumed to manifest itself in the priors of the latent variables in the form of spurious correlations.

CONTA \cite{CONTA} tackles the confounding bias in causal models by using the backdoor criterion \cite{causaljudea} for image segmentation task. To obtain invariant conditional distributions applicable in the test domain \cite{InvariantCD} uses the selection diagrams. CGNN \cite{CGNN}, on the other hand, applies maximum mean discrepancy between the generated distribution by a set of generative models and true distribution of the data to approximate the causal model of the data. DAG-GNN \cite{DAG-GNN} uses a variational autoencoder to learn the adjacency matrix of the DAG that is supposed to represent the causal model of the data. In \cite{unknownI}, the causal structure is learnt from both observational data and unknown interventions. In \cite{DisentangleR}, causal disentanglement is studied and a set of metrics are introduced to measure the robustness of the learnt features. 

Most of these methods attempt to learn the causal model of the data, however, in motion forecasting we can hypothesize a causal model based on the domain knowledge. Furthermore, identifying true causal model of the data is extremely challenging and impossible in an unsupervised setting with observational data \cite{impossibledisentangle}. Hence, the previous methods have not investigated causal discovery in real-world time-series data, which has exclusive challenges.

\vspace{-0.6em}
\subsection{Motion Forecasting}
\vspace{-0.6em}
Most of the deep-learning based motion forecasting models consist of an Encoder, an Interaction module, and a Decoder, \cite{motionsurvey} which we will refer to as EID architecture. STGAT \cite{STGAT} uses a graph attention network to model the human-human interactions and LSTM cells to encode past trajectories and to predict future trajectories. On the other hand, STGCNN \cite{STGCNN} consist of CNN layers only which significantly improves inference speed while performing on par with its RNN counterparts in terms of prediction error. 
Social-GAN \cite{SocialGAN} focuses on the multi-modality of human trajectories and uses GANs to generate multiple future trajectories per sample. Social-BiGAT \cite{Social-BiGAT} models both social and physical interactions using images of the environment and trajectories of pedestrians. DESIRE \cite{DESIRE} proposes a conditional variational autoencoder (CVAE) to tackle the multi-modality of human trajectories. Trajectron++ \cite{Trajectron++} focuses on the dynamic constraints of agents when predicting trajectories and models not only the interactions between humans but also interactions of humans with other agents in the scene. PECNet \cite{PECNet} uses a CVAE to obtain a distribution for the endpoints of the pedestrians, which are employed to predict the future trajectories conditioned on the endpoints. 
According to the recent works \cite{causalmotion, counterfactualmotion}, there are varying types of noise and intrinsic biases in the data resulting from human behaviour in different environments. The performance of the current motion forecasting models is negatively affected when transferred to a new environment or in the presence of a noise \cite{causalmotion}. Furthermore, there are some confounding factors in motion forecasting applications, which can lead to biased predictions \cite{counterfactualmotion}. Our work focuses on the robustness aspect of the motion forecasting problem. We propose a method applicable to motion forecasting models with an EID architecture, which will be discussed in the following sections.

\vspace{-0.8em}
\section{Proposed Method}\label{sec:proposed_method}
\vspace{-0.6em}
\subsection{Formalism of Motion Forecasting}
\vspace{-0.6em}
Consider a motion forecasting problem in a multi-agent environment with $M$ agents. Let's denote the states of the agents as $S_t=\{s_t^1, s_t^2, ... , s_t^M\}$ where $s_t^i=(x_t^i,y_t^i)$ are the 2D coordinates of agent $i$ at time $t$. The objective is to predict the states of agents  $T_{pred}$ time steps into the future from observations of the previous $T_{obs}$ time steps. The model takes as input $X^i=\{s_1, s_2, ... , s_{T_{obs}}\}$ and predicts $Y^i=\{s_{T_{obs}+1}, s_{T_{obs}+2}, ... , s_{T_{obs}+T_{pred}}\}$ for every agent $i$. 

\begin{figure}[ht]
\vskip 0.2in
\begin{center}
\centerline{\includegraphics[width=\textwidth]{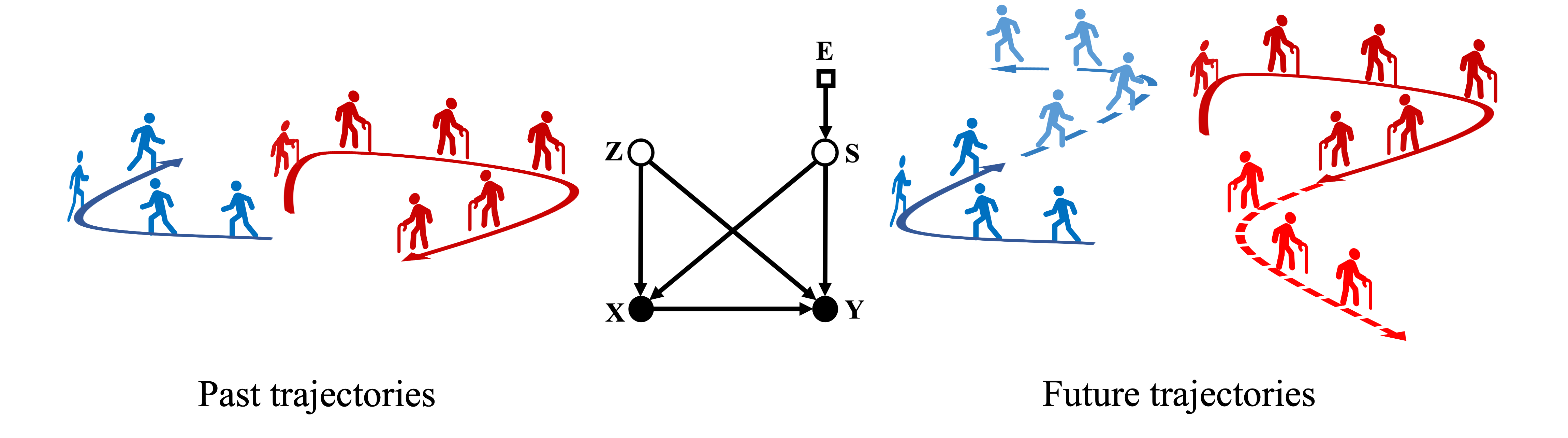}}
\caption{The proposed causal model (center). Filled circles are observed variables and empty shapes are the unobserved variables. $X$ and $Y$ represent past trajectories and future trajectories to be predicted, respectively. $Z$ represents invariant features common across domains, such as physical laws, while $S$ represents variant features specific to each environment, such as motion styles. Finally, $E$ is the selection variable, conditioning on $E$ allows us to switch between environments.}
\label{fig:causalmodel}
\end{center}
\vskip -0.2in
\end{figure}

Deep learning based motion forecasting models are often composed of three modules; an encoder, an interaction module and a decoder \cite{motionsurvey}. 
The encoder takes previously observed states $X^i$ and produces representations of the observed states. The interaction module receives the representations from the encoder to capture the social interactions (human-human), the physical interactions (human-space), or both. Eventually, the decoder takes as input, the interaction vectors and predicts the future states of the agent $Y^i$. 
It is worth mentioning that any type of neural network, which can handle sequence-to-sequence modeling such as Recurrent Neural Networks (RNNs) or temporal Convolutional Neural Networks (CNNs) could be employed as the Encoder and the decoder of the model.

Training data is usually collected from a set of $K$ environments or domains, $E=\{e_1, e_2, ..., e_k\}$. An example set of environments could be `Pedestrians crossing an intersection', `People standing in front of a department store', or `Pedestrians walking along the sidewalk'. In real-world datasets, it is common for the training and test environments to differ significantly \cite{counterfactualmotion}, such as in the widely-used ETH-UCY benchmark. Therefore, the i.i.d assumption does not always hold in practice and Empirical Risk Minimization (ERM) techniques cannot be used to train the neural networks. In the following section, we briefly present some concepts of causality applied in our proposed method. 

\vspace{-0.6em}
\subsection{Background in Causality}
\vspace{-0.6em}
Structural Causal Models (SCMs) are a way of describing causal features and their interactions, which are represented by Directed Acyclic Graphs (DAG) \cite{causaljudea}. We say that $X$ is a direct cause of $Y$ when there is a directed edge from $X$ to $Y$ in the DAG. The cause and effect relation $X \rightarrow Y$ tells us that changing the value of $X$ can result in a change in the value of $Y$, but that the reverse is not true. A causal model receives as inputs a set of qualitative causal assumptions (A), a set of queries concerning the causal relations among variables (Q), and experimental or non-experimental data (D), presumably consistent with (A). A causal model makes predictions about the behavior of a system. The outputs of a causal model are then a set of logical implications of (A), data-dependent claims (C) represented by the magnitude or likelihoods of the queries (Q), and a list of testable statistical implications (T) \cite{Transportability}.

A major drawback of deep learning models using statistical inference is that they do not consider the confounding bias. Normally, we would predict $p(Y|X)$, however, in the presence of confounding variables, $p(Y|X)$ cannot capture the true causal effect of $X$ on $Y$ and the predictions will be erroneous. Therefore, we are interested in the causal query $p(Y|do(X))$ which eliminates the confounding bias by surgically intervening on $X$. This causal query can be calculated using the Backdoor Criterion \cite{causaljudea}. Our set of causal assumptions are causal sufficiency and faithfulness \cite{markovblanket}.
Since we do not assume i.i.d samples, we use an augmented version of causal models called Selection Diagrams \cite{Transportability} to discriminate between samples of different domains. Selection variables $S$ are introduced to model the variations in each domain. Every member of $S$ corresponds to a mechanism by which the two populations differ. Therefore, we can condition on the values of $S$ for switching between domains \cite{Transportability}.
\vspace{-0.6em}
\subsection{Causal Formalism}
\vspace{-0.6em}
Our proposed SCM for motion forecasting is shown in Figure \ref{fig:causalmodel}. There are two causal variables in motion forecasting, that can affect the trajectories of the pedestrians: invariant features and variant features \cite{causalmotion}. \textit{Invariant features} do not vary across domains but can influence the trajectories of the pedestrians. These features can be associated with physical laws, traffic laws, social norms, and etc. 
In contrast, \textit{variant features} vary across domains and can be associated with the motion styles of the pedestrians in an environment \cite{causalmotion}. 
In our proposed causal model we consider four endogenous variables: $S$ to represent variant features, $Z$ for invariant features, $X$ for past trajectories, and $Y$ for future trajectories.

We also introduce an additional exogenous variable $E$ as the selection variable \cite{Transportability} to account for the changing factors in each environment. The selection variable acts as an identifier of an environment such that the conditional probability of $p(X,Y|E=e_1)$ represents the distribution of the samples in an environment having the selection variable of $e_1$. In other words, we assume that all members of the dataset are sampled from a parent distribution over $X$, $Y$, and $E$. Furthermore, we assume that the proposed model is \textit{causally sufficient}. That is, it explains all the dependencies without adding further causal variables. To build a causal graph, we reason about causal edges connecting the causal variables:\\
\textbf{(1)} There should be an edge from $S$ to $X$ and $Y$ because motion styles can influence the speed of the pedestrians.\\
\textbf{(2)} There should be an edge from $Z$ to $X$ and $Y$ because social norms can influence how closely pedestrians move next to each other.\\
\textbf{(3)} There should be an edge from $X$ to $Y$ because the location in the past determines where the pedestrian is going to be in the future. \\
\textbf{(4)} $S$ varies in each domain, hence, there should be an edge from selection variable $E$ to $S$  to account for all the changing factors in each domain.\\ 

\vspace{-0.9em}
\subsection{Learning Latent Variables with Variational Inference}
\vspace{-0.6em}
According to Figure \ref{fig:causalmodel}, $S$ and $Z$ confound the causal effect of $X$ on $Y$ and the backdoor paths are $Y \leftarrow S \rightarrow X$ and $Y \leftarrow Z \rightarrow X$. Therefore, we need to use the backdoor criterion to eliminate the confounding effect. To this end, the causal effect of $X$ on $Y$ (i.e., $p(Y|do(X))$) is calculated. Before calculating this causal query, we need to recall the \textit{S-admissibility} criterion:
\begin{definition}
\label{sadmissible}
``A set $T$ of variables satisfying $(Y \independent S|T, X)$ in $D_{\Bar{X}}$ will be called \textit{S-admissible} (with respect to the causal effect of $X$ on $Y$).''\cite{Transportability}
\end{definition}
Where $D_{\Bar{X}}$ denotes an augmented SCM in which we intervene on X (i.e., the arrows entering X are deleted). According to this definition, in Figure \ref{fig:causalmodel}, the set ${S}$ will be an E-admissible set, which means once we know about $S$, the predictions ($Y$) will not be dependent on the environment ($E$) we're in. 

We can now proceed to calculate $p(Y|do(X))$:
\begin{equation}
\label{eq:backdoor}
\begin{split}
      &p(Y|do(X), E=e_i)  \\
      &=\int p(Y|X,S,Z,E)p(S|E)p(Z|E) ds dz \\
      &=\int p(Y|X,S,Z)p(S|E)p(Z) ds dz  \\
      &= E_{p(S|E), p(Z)}[p(Y|X,S,Z)],
\end{split}
\end{equation}  
where $p(Y|do(X), E=e_i)$ is the causal effect of $X$ on $Y$ in environment $i$. The first line follows from the backdoor criterion and the second line follows from the fact that $S$ is E-admissible \cite{Transportability} and $X$ is a collider. Equation \ref{eq:backdoor} implies that in order to calculate the causal effect of $X$ on $Y$ in every environment, we need to stratify over $S$ and $Z$. Although we don't know the true posterior of the latent variables, we will approximate them using mean-field variational inference \cite{Goodfellowdeeplearning}. 

The standard log-likelihood to train the model is given by:
\begin{equation}
\label{eq:mll}
\begin{split}
\underset{p}{max} \:  E_{p^*(x,y)}[\log p(x,y)]
\end{split}
\end{equation}  
where $p^*(x,y)$ is the distribution of samples in the dataset. Calculating $p(x,y)$ is intractable since $p(x,y)=\sum_{e \in E}\int(p(x,y,s,z,e)dsdz$ where $E$ is the set of environments. Mean-field variational inference is used to approximate the true posterior distribution of latent variables. 

For this purpose, the \textit{Evidence Lower Bound (ELBO)} function is used to train the model:
\begin{equation}
\label{eq4}
   \underset{p, q}{max} \: E_{p^*(x,y)}\left[E_{q(s,z, e|x,y)}\left[\log \frac{p(x,y,s,z,e)}{q(s,z,e|x,y)}\right]\right]
\end{equation}
Theoretically, the ELBO function will drive $q(s,z,e|x,y)$ towards its target $p(s,z,e|x,y)$ and the objective function in Equation \ref{eq4} will become Equation \ref{eq:mll}. The approximate model is still intractable because we don't have access to the future trajectories in the test domain. Therefore, we replace it with $q(s,z,e,y|x)$ and the loss function becomes:
\begin{equation}
\label{eq:twoterms}
\begin{split}
   &\underset{p, q}{max} \: E_{p^*(x)}E_{p^*(y|x)}[\log q(y|x)] + \\
   &E_{p^*(x)}\Biggl[E_{q(s,z,e,y|x)}\left[\frac{p^*(y|x)}{q(y|x)}\log \frac{p(x,y,s,z,e)}{q(s,z,y,e|x)}\right]\Biggr]
\end{split}
\end{equation}
Assuming that $q(y|x)$ is Gaussian, the first term in the loss function of Equation \ref{eq:twoterms} would be the negative of Mean Squared Error (MSE). Eventually, this term will drive $q(y|x)$
towards $p^*(y|x)$ and the second term will become a lower bound of $\log p(x)$ as stated in Theorem \ref{thm:lowerbound}, which we prove below.
\begin{theorem}
\label{thm:lowerbound} 
Let $p(x,y,s,z,e)$ be the joint distribution of latent and observed variables and $q(s,z,y,e|x)$ be the approximate posterior of latent variables and future trajectories given the past trajectories in GCRL, a lower bound on the log-likelihood of past trajectories is:
\begin{equation*}
    E_{q(s,z,e,y|x)}\left[\log \frac{p(x,y,s,z,e)}{q(s,z,y,e|x)}\right] \leq \log p(x)
\end{equation*}
\end{theorem}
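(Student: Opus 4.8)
The plan is to recognize this statement as a standard Evidence Lower Bound (ELBO) inequality in disguise. I would first collapse the latent variables into a single bundle, writing $w = (s,z,e,y)$, so that $p(x,y,s,z,e)$ becomes the joint $p(x,w)$ and $q(s,z,y,e|x)$ becomes the approximate posterior $q(w|x)$. The target $\log p(x)$ is then the marginal log-likelihood obtained by integrating (and summing, for the discrete selection variable $E$) the joint over all latent coordinates. With this bookkeeping in place, the claim reduces to the familiar inequality $E_{q(w|x)}[\log(p(x,w)/q(w|x))] \leq \log p(x)$.

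There are two equivalent routes, and I would present whichever reads most cleanly. The first is a direct application of Jensen's inequality: write $\log p(x) = \log \int q(w|x)\,\frac{p(x,w)}{q(w|x)}\,dw$, recognize the right-hand side as $\log E_{q(w|x)}[p(x,w)/q(w|x)]$, and then push the logarithm inside the expectation using concavity of $\log$ to obtain $\log E_{q(w|x)}[\cdot] \geq E_{q(w|x)}[\log(\cdot)]$, which is precisely the stated bound. The second route makes the gap explicit via a KL decomposition: factor $p(x,w) = p(w|x)\,p(x)$ inside the logarithm, split the expectation, and observe that the difference $\log p(x) - E_{q(w|x)}[\log(p(x,w)/q(w|x))]$ equals $D_{KL}(q(w|x)\,\|\,p(w|x))$. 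Since KL divergence is nonnegative, the inequality follows immediately, and this version has the advantage of identifying exactly when equality holds (namely when $q(w|x) = p(w|x)$ almost everywhere), which is the remark the authors make right before the theorem about the ELBO driving $q$ toward its target posterior.

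The main obstacle is not conceptual but notational: I must be careful that the expectation $E_{q(s,z,e,y|x)}$ genuinely integrates over every latent coordinate appearing in both $p$ and $q$, and that the marginalization defining $p(x) = \sum_{e}\int p(x,y,s,z,e)\,ds\,dz\,dy$ is taken over the same variables, so that the Jensen step is applied to a bona fide expectation of a ratio whose denominator is the very density being integrated against. A secondary point of care is the mixed discrete-continuous nature of the latent space, since $E$ ranges over the finite environment set while $S$, $Z$, $Y$ are continuous; I would state once that integrals against $dz\,ds\,dy$ are understood together with summation over $e$, so that the single symbol $\int \cdots$ covers both and the argument goes through verbatim. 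Beyond this, the proof is a one-line application of a standard inequality and requires no further machinery from earlier in the paper.
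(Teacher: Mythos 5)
Your proof is correct, but it takes a genuinely different and more elementary route than the paper's. You bundle all the unobserved quantities into $w=(s,z,e,y)$ and invoke the standard ELBO inequality, either via Jensen applied to $\log p(x)=\log E_{q(w|x)}[p(x,w)/q(w|x)]$ or via the identity $\log p(x) - E_{q(w|x)}[\log (p(x,w)/q(w|x))] = KL(q(w|x)\,\|\,p(w|x)) \geq 0$; this needs only that $q(\cdot|x)$ is a normalized density over $(s,z,y,e)$ and that $p(x)$ is the marginal of the joint, both of which hold under the paper's factorization. The paper instead proceeds structurally: it factorizes $q(s,z,e,y|x)=q(y|x,s,z)q(z|x)q(s|x,e)q(e|x)$ and $p(x,y,s,z,e)=p(y|x,s,z)p(x|s,z)p(s|e)p(z)p(e)$ via the Causal Markov condition, sets $q(y|x,s,z)=p(y|x,s,z)$ so the $y$-dependence cancels and can be integrated out, rewrites $p(s|e)p(e)=p(s)p(e|s)$, decomposes the result into $E_{q(s,z|x)}[\log p(x|s,z)]$ minus three KL divergences, drops the nonnegative KL terms, applies Jensen, and finally asserts $E_{q(s,z|x)}[p(x|s,z)]=p(x)$ on the grounds that ``eventually'' $q(s|x)=p(s)$ and $q(z|x)=p(z)$. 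The trade-off is instructive: your argument is shorter, holds for an arbitrary approximate posterior, and identifies the equality condition exactly, whereas the paper's last step is only an equality at the variational optimum, so as a proof of the stated inequality yours is actually the tighter justification. What the paper's longer derivation buys is not the bound itself but the interpretable decomposition in Equation \ref{eq:elbof} --- reconstruction term plus KL regularizers on $S$, $Z$, and $E$ --- which becomes the practical training objective in Equation \ref{eq:finalloss}; your one-line argument establishes the inequality without producing that loss function.
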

\begin{proof} 
According to the Causal Markov condition \cite{causal_survey}, we can factorize $q(s,z,e,y|x)=q(y|x,s,z)q(z|x)q(s|x,e)q(e|x)$ and $p(x,s,z,y,e)=p(y|x,s,z)p(x|s,z)p(s|e)p(z)p(e)$. The first approximate model, $q(y|x,s,z)$, can be replaced with $p(y|x,s,z)$ since it is known. Secondly, since $q(s|x,e)$ is an approximation of its true posterior, we assume it to be $q(s|x)$.  Therefore:
\begin{equation}
\label{eq:elbof}
\begin{split}
   &E_{q(s,z,y,e|x)}\left[\log \frac{p(x,y,s,z,e)}{q(s,z,y,e|x)}\right] = \\
   &E_{q(s,z,e|x)}\left[\log \frac{p(x|s,z)p(s|e)p(z)p(e)}{q(z|x)q(s|x)q(e|x)}\right] = \\
   &E_{q(s|x), q(z|x)}\left[\log p(x|s,z)\right] - \\ 
   &KL(q(z|x)||p(z)) - KL(q(s|x)||p(s)) - \\
   &KL(q(e|x)||p(e|s)) \leq \log p(x)
\end{split}
\end{equation}
The detailed proof is given in Appendix \ref{appA}.
\end{proof}
As shown in Equation \ref{eq:elbof}, $q(e|x)$ will be driven towards $p(e|s)$, hence, it is sensible to model $q(e|x)$ with $p(e|s)$. By leveraging the Causal Markov condition and replacing $q(e|x)$ with $p(e|s)$, we can obtain a lighter inference model (q) and the loss function in Equation \ref{eq:twoterms} becomes:
\begin{equation}
\label{eq:finalloss}
\begin{split}
     \underset{p, q}{max} &~ E_{p^*(x,y)}\Biggl[\log q(y|x) + \frac{1}{q(y|x)}\\
     &\quad E_{q(s|x),q(z|x)}\Bigl[p(y|x,s,z)
     \log \frac{p(x|s,z)p(s)p(z)}{q(s|x)q(z|x)}\Bigr]\Biggr]  ,
\end{split}
\end{equation}
where $p(s) = \sum_{e \in E}p(s|e)p(e)$ which means that $S$ has a Gaussian mixture prior and $q(y|x)=E_{q(s|x), q(z|x)}[p(y|x,s,z)]=p(y|do(x))$ which can be calculated by ancestral sampling.  The expectations in Equation \ref{eq:finalloss} can be estimated using the Monte-Carlo method after applying the re-parametrization trick \cite{repar}. Consequently, GCRL learns: \textbf{(1)} To minimize the distance between groundtruth future trajectories and predicted future trajectories via maximizing $\log q(y|x)$, \textbf{(2)} To eliminate the confounding effect by estimating the causal effect of $X$ on $Y$ via $p(y|do(x))$, \textbf{(3)} To reconstruct past trajectories via maximizing $\log p(x|s,z)$, \textbf{(4)} Invariant representations via maximizing $\log \frac{p(z)}{q(z|x)}$, \textbf{(5)} Variant representations via maximizing $\log \frac{p(s)}{q(s|x)}$. Furthermore, since GCRL learns to predict the future trajectories with a generative approach, it can tackle the multi-modality of trajectories. 
\begin{figure*}[ht]
\vskip 0.2in
\begin{center}
\centerline{\includegraphics[width=\textwidth]{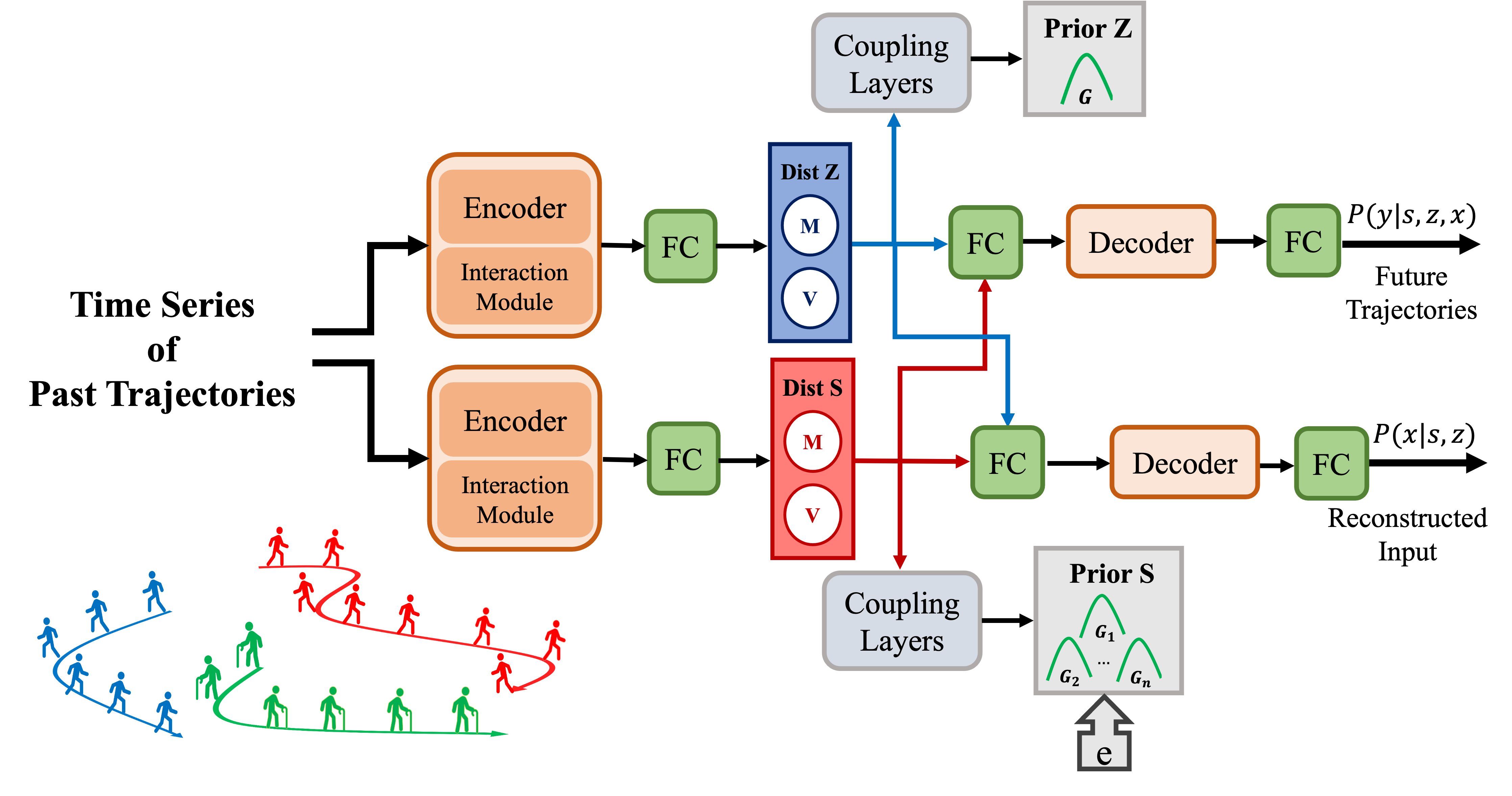}}
\caption{A general overview of the proposed method. The approximate posteriors of the latents are estimated using the encoded past trajectories and the priors of the latents are calculated using the coupling layers. The encoder, interaction module, and decoder of the model can be replaced by motion forecasting models that use an EID architecture.}
\label{fig:model}
\end{center}
\vskip -0.2in
\end{figure*}

Gaussian Mixture priors have been previously used in popular VAE models \cite{VaDE, SVAE}. Variational models are susceptible to poor local minima and posterior collapse, it is important to show the identifiability of latent variables \cite{vae-identifiability}. Variational models with GMM priors are proven to be identifiable \cite{vae-identifiability}, hence, using a Gaussian Mixture prior for $S$ aligns with the theory as well. Furthermore, Gaussian Mixture Models (GMMs) are universal approximators, hence, $q(s|x)$ will be capable of producing arbitrary variant features. To obtain a better likelihood for latent variables, we use coupling layers \cite{rnvp} to learn rich priors $p(s|e_i)$ and $p(z)$ from simple priors such as standard Normal distributions. These proirs are referred to as \textit{flow priors} and are also used in VAEs \cite{learning_prior}. 

A general overview of our model is shown in Figure \ref{fig:model}. The encoded past trajectories are used to model the approximate posteriors of the latents. We sample from these posteriors to reconstruct the past trajectories and predict the future trajectories. As shown in Figure \ref{fig:model}, any motion forecasting model with an EID architecture can be used with our method provided that the decoder is a piece-wise affine injective function as this is required to obtain the weakest form of identifiability (i.e., identifiability up to an affine transformation \cite{vae-identifiability}).
\vspace{-0.6em}
\subsection{Domain Adaptation Method}
\vspace{-0.6em}
After learning the causal mechanisms and causal variables using Equation \ref{eq:finalloss}, we know that $q(z|x)$ will generate representations with a single Gaussian distribution and $q(s|x)$ will generate representations with a Gaussian Mixture Model (GMM) distribution. Therefore, as illustrated in Figure \ref{fig:dist}, all representations generated by $q(z|x)$ will be in the same range, whereas the representations of $q(s|x)$ will form clusters, each modeled by a component of the GMM. Since $Z$ is invariant, we can directly transfer it to the new domain without any fine-tuning. However, $S$ can be interpreted as a weighted sum of the representations learnt from different environments of the training domains, which may be used in the test domains as well. Depending on how related the test domains are to the training domains, we may need to fine-tune the components of the GMM and obtain a new prior for $S$.  
% Considering that we have both observed and future trajectories in the test domain, 
Thus, to fine-tune the model at inference time, we reuse the loss function in Equation \ref{eq:finalloss} without the regularizing $Z$ posterior by omitting $q(z|x)$. Eventually, $q(s|x)$ will be driven towards the new prior and compensate for the domain shift in the test domain. The models to predict future trajectories $p(y|x,s,z)$ and to reconstruct past trajectories $p(x|s,z)$ also needs to be fine-tuned as the samples of $q(s|x)$ will be updated. 

Consequently, we only fine-tune the models of $q(s|x)$, $p(s)$, $p(y|x,s,z)$, and $p(x|s,z)$ while the models for, $p(z)$, and $q(z|x)$ can be arbitrarily complex as it is not required to fine-tune them in the test domain, but all the other models should be as simple as possible.

\begin{figure}[ht]
\vskip 0.2in
\begin{center}
\centerline{\includegraphics[width=0.7\columnwidth]{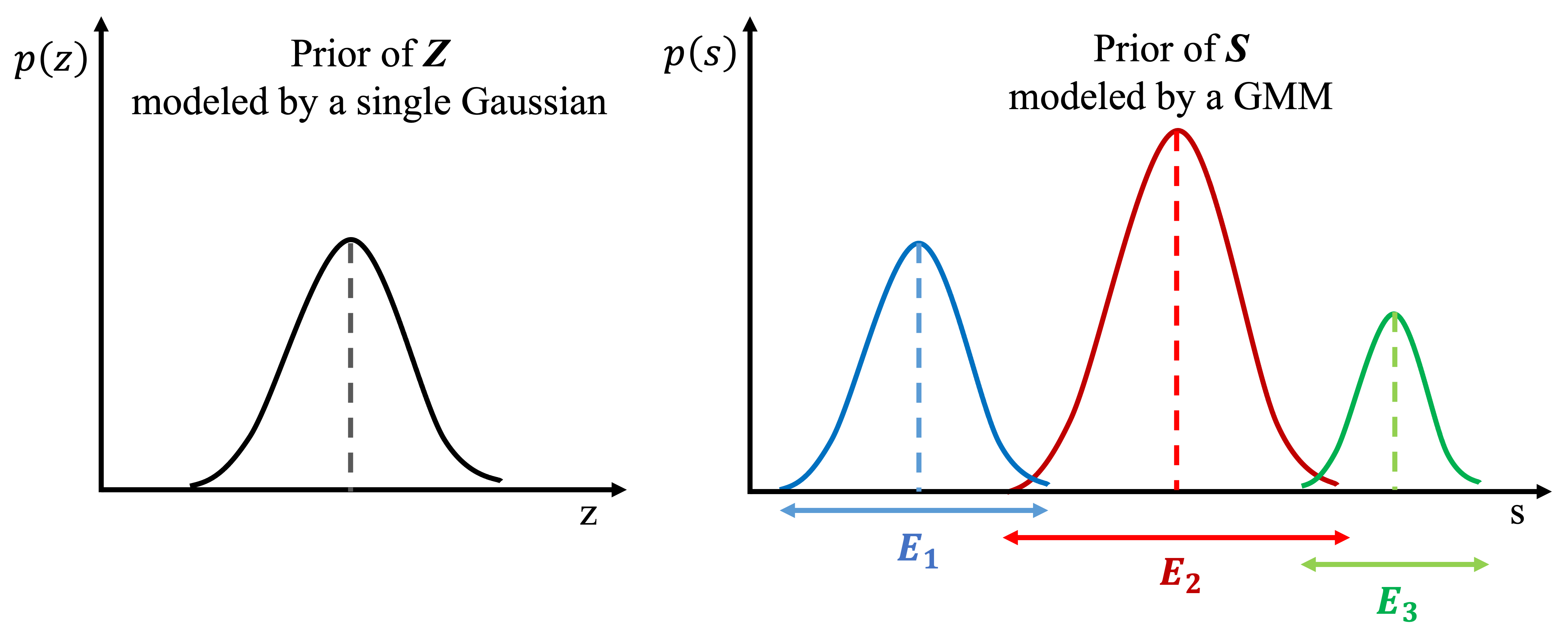}}
\caption{The priors of variant and invariant features. $E_i$ represents environment $i$.}
\label{fig:dist}
\end{center}
\vskip -0.2in
\end{figure}

\vspace{-0.6em}
\section{Experiments}
\vspace{-0.6em}
In our experiments we evaluate our models using two metrics often used in evaluation of the motion forecasting models: 
\begin{equation} 
\label{eq:ade}
	ADE = \frac{\sum_{i=1}^{n}\sum_{t=T_{obs}+1}^{T_{pred}}((\hat{x}_i^t-x_i^t)^2+(\hat{y}_i^t-y_i^t)^2)^ \frac{1}{2}}{n(T_{pred}-(T_{obs}+1))} 
\end{equation}
\begin{equation} 
\label{eq:fde}
	FDE = \frac{\sum_{i=1}^{n}((\hat{x}_i^{T_{pred}}-x_i^{T_{pred}})^2+(\hat{y}_i^{T_{pred}}-y_i^{T_{pred}})^2)^\frac{1}{2}}{n}, 
\end{equation}
where $\hat{x}_i^t$ and $\hat{y}_i^t$ are the \textit{predicted} horizontal and vertical coordinates of the pedestrian at time step $t$, while $x_i^t$ and $y_i^t$ are the \textit{actual} horizontal and vertical coordinates of the pedestrian at time step $t$.
\vspace{-0.6em}
\subsection{Datasets}
\vspace{-0.4em}
\paragraph{ETH-UCY dataset}This dataset contains the trajectory of 1536 detected pedestrians captured in five different environments \textit{\{hotel, eth, univ, zara1, zara2\}}. All trajectories in the dataset are sampled every 0.4 seconds. Following the experimental settings of \cite{causalmotion, counterfactualmotion, STGAT}, we also use a leave-one-out approach for training and evaluating our model so to predict the future 4.8 seconds (12 frames), we utilize the previously observed 3.2 seconds (8 frames).
\vspace{-0.6em}
\paragraph{Synthetic dataset}This dataset published in \cite{causalmotion} contains the trajectories of pedestrians in circle-crossing scenarios \cite{circle_cross} where the minimum separation distance of pedestrians differ in each domain.  There are 8 domains in the dataset with separation distances  $\{0.1, 0.2, 0.3, 0.4, 0.5, 0.6, 0.7, 0.8\}$ meters.  Each domain contains 10,000 trajectories for training, 3,000 trajectories for validation, and 5,000 trajectories for testing. 
\vspace{-0.6em}
\subsection{Robustness}
\vspace{-0.6em}
To evaluate the robustness of our model in the presence of spurious correlations, we compare our method with \cite{causalmotion, counterfactualmotion}. For a fair comparison, we use the STGAT \cite{STGAT} as our baseline model. Although, ETH-UCY contains five environments, it is not trivial to pinpoint the shifts in each environment. Therefore, we add a third dimension to the coordinates of the pedestrians, which measures observation noise and is modeled as in \cite{causalmotion}:
\begin{equation} 
\label{eq:noise}
\begin{split}
    &\gamma_t := (\Dot{x}_{t+\delta t} - \Dot{x}_t)^2 + (\Dot{y}_{t+\delta t} - \Dot{y}_t)^2 \\
    &\sigma_t := \alpha (\gamma_t + 1),
\end{split}
\end{equation}
where $\Dot{x}_t = x_{t+1} - x_{t}$ and $\Dot{y}_t = y_{t+1} - y_{t}$ reflect the velocity of the pedestrians within the temporal window length of $\delta t = 8$ and $\alpha$ is the noise intensity. For the training domains $\alpha \in \{1, 2, 4, 8\}$ while for the test domain $\alpha \in \{8, 16, 32,64\}$. The test domain is the \textit{eth} environment for this experiment. Since the value of $\alpha$ for the third input dimension in training domains were $\{1, 2, 4, 8 \}$, the samples of the test domain with $\alpha \in \{16, 32, 64\}$ can be considered as out-of-distribution samples. To evaluate other methods in presence of observation noise, we have used the publicly available code from \cite{causalmotion}. The results in Table \ref{tab:table1} demonstrate the robustness of our method against observation noise while performing comparably with other motion forecasting models for low $\alpha$. Since our proposed method also learns to reconstruct inputs, it eliminates the effect of noise by reconstructing uncorrupted inputs, hence, it is not sensitive to noise. 

\begin{table*}[h]
\caption{Robustness of different methods in the ETH-UCY dataset with controlled spurious correlation.}
\def\arraystretch{1.2}
\label{tab:table1}
\vskip 0.15in
\begin{center}
\begin{small}
\begin{sc}
\begin{tabular}{lcccc}
\textbf{Method} & \textbf{ADE/FDE} & \textbf{ADE/FDE} & \textbf{ADE/FDE} & \textbf{ADE/FDE} \\
\toprule
 &  $\alpha=8$ & $\alpha=16$ & $\alpha=32$ & $\alpha=64$ \\ 
\midrule
Baseline \cite{STGAT} & \textbf{0.80/1.37} & 2.15/3.80 & 2.64/4.44 & 2.68/4.48 \\
Counterfactual \cite{counterfactualmotion} & 0.80/1.59 & 1.62/2.68 & 2.32/3.90 & 2.68/4.52 \\
Invariant $\lambda=1.0$ \cite{causalmotion} & 0.94/1.65 & 1.04/\textbf{1.76} & 1.52/2.55 & 1.96/3.35 \\
Invariant $\lambda=3.0$ \cite{causalmotion} & 0.91/1.67 & 0.99/1.87 & 1.18/2.20 & 1.27/2.33 \\
Invariant $\lambda=5.0$ \cite{causalmotion} & 0.98/1.79 & 1.00/1.83 & 1.06/1.90 & 1.56/2.58 \\
GCRL (ours)    & 0.97/1.8 & \textbf{0.97}/1.8 & \textbf{0.97/1.8} &  \textbf{0.97/1.8}     \\
\bottomrule
\end{tabular}
\end{sc}
\end{small}
\end{center}
\vskip -0.1in
\end{table*}

\vspace{-0.6em}
\subsection{Domain Generalization}
\vspace{-0.6em}
In this experiment, we evaluate the generalizability of our proposed method using the synthetic dataset \cite{causalmotion}. We will refer to \cite{causalmotion} as IM in the remainder of the paper. For a fair comparison with IM, we use a PECNet \cite{PECNet} variant as our base model, train the model with five seeds and report the mean and standard deviation of the results. We will use the PECNet variant as our base model in the subsequent experiments. The Minimum Separation Distances (MSD) in the training and test domains are $\{0.1, 0.3, 0.5\}$ and $\{0.1, 0.2, 0.3, 0.4, 0.5, 0.6, 0.7, 0.8\}$ meters, respectively. 

\begin{figure}[ht]
\vskip 0.2in
\begin{center}
\centerline{\includegraphics[width=0.5\textwidth]{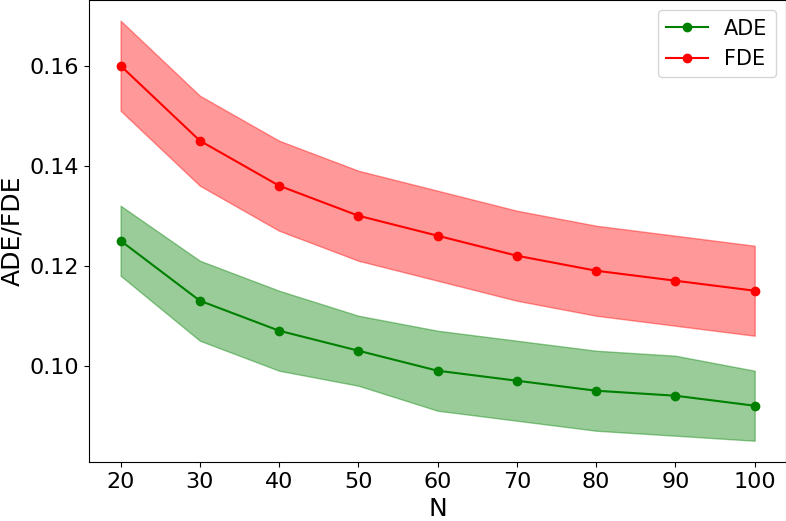}}
\caption{Comparison of ADE/FDE for different values of $N$.}
\label{fig:best_k}
\end{center}
\vskip -0.2in
\end{figure}

\begin{figure}[ht]
\vskip 0.2in
\begin{center}
\centerline{\includegraphics[width=0.5\textwidth]{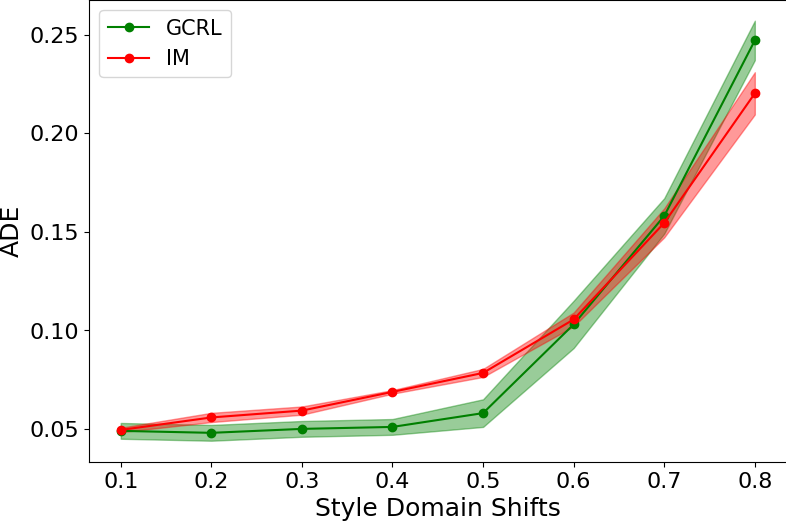}}
\caption{Domain generalization with different style domain shifts. The mean and standard deviation of ADEs are calculated and plotted for 5 seeds.}
\label{fig:generalize}
\end{center}
\vskip -0.2in
\end{figure}
Since GCRL is a generative approach, we can generate multiple future trajectories per sample and select the best of them to tackle the multi-modality of trajectories as practiced in \cite{SocialGAN, STGAT, counterfactualmotion}. Therefore, we use a hyper-parameter N in testing to determine the number of generated trajectories per sample. Figure \ref{fig:best_k} illustrates the significant impact that a generative approach can have in the performance. We will use $N=100$ in the following experiments.

As illustrated in Figure \ref{fig:generalize}, our method is more robust to domain shifts compared to IM, and it is achieving slightly better ADE, which is 8.8\% on average. 
It is evident that for OOD-Inter cases where the test domain shift is within the range of training domain shifts e.g., test domain shift=0.4, GCRL is reusable since ADE is insensitive to the domain shifts. On the other hand, for the test domain shifts out of the range of training domain shifts, the OOD-Extra cases, the model needs to be fine-tuned.
\vspace{-0.6em}
\subsection{Domain Adaptation}
\vspace{-0.6em}
In this experiment, we evaluate the efficiency of our proposed method in knowledge transfer using the synthetic dataset for an OOD-Extra case. We train IM and GCRL with the previous setting and fine-tune different components of the model using a small number of batches from the test domain. The batch size is 64, hence, the number of samples used in fine-tuning will be $\{1, 2, 3, 4, 5, 6\} \times 64$. For IM, we fine-tune it using the best setting reported in the paper. For GCRL, we fine-tune our models for $p(y|x,s,z)$, $p(x|s,z)$, $p(s)$ and $q(s|x)$. 

\begin{figure}[ht]
\vskip 0.2in
\begin{center}
\centerline{\includegraphics[width=0.5\textwidth]{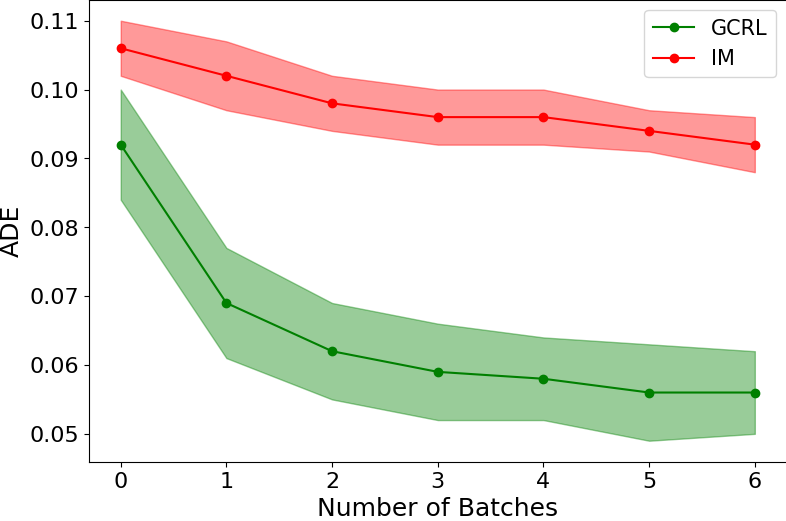}}
\caption{Domain Adaptation with different number of batches}
\label{fig:adaptation}
\end{center}
\vskip -0.2in
\end{figure}

As shown in Figure \ref{fig:adaptation}, GCRL adapts to the new environment faster than IM and it is more robust to OOD-Extra shifts. In our experiment we fine-tune only the weights of GMM prior, which improves the ADE by 34.3\% from IM on average.

\vspace{-0.6em}
\subsection{Identifiability}
\vspace{-0.6em}
To evaluate the identifiability of $S$ and $Z$, we train GCRL with five different seeds on the synthetic dataset. With a GMM prior and a piecewise-affine injective decoder, identifiability up to an affine transformation can be achieved. Therefore, we calculate MCC between pairs of learnt latents before and after applying an affine transformation $f$, which we refer to as Strong MCC and Weak MCC, respectively. $f$ can be learnt via least-squares minimization. 

The results in Table \ref{tab:table2} support the identifiability of $S$ as expected. $Z$, however, is not identifiable from observed trajectories and more data from the environments is required. This finding also aligns with the capability of past trajectories in explaining the features that $Z$ is supposed to represent such as physical laws. For example, in order to learn that pedestrians avoid obstacles in their paths, one needs to provide images of the environments. Despite the poor identifiability of $Z$, the empirical results in the next section indicate the invariance of $Z$ across domains. 

\begin{table}[ht]
\caption{MCC of $S$ and $Z$ after and before applying an affine transformation.}
\def\arraystretch{1.1}
\label{tab:table2}
\vskip 0.15in
\begin{center}
\begin{small}
\begin{sc}
\begin{tabular}{cccc}
\textbf{Weak MCC of S}      & \textbf{Weak MCC of Z}       & \textbf{Strong  MCC of S}    & \textbf{Strong  MCC of Z} \\
\toprule
0.956 & 0.049 & -0.16 &  -0.025 \\
\bottomrule
\end{tabular}
\end{sc}
\end{small}
\end{center}
\vskip -0.1in
\end{table}
\vspace{-0.6em}
\subsection{Ablation studies}
\vspace{-0.6em}
In this experiment, we examine the individual contribution of the components of GCRL. First, we remove the coupling layers from the priors and replace them with learnable parameters. Second, we reconstruct past trajectories and predict future trajectories using only samples from $q(z|x)$. Third, we use only the samples of $q(s|x)$. We train the models on the synthetic dataset with $\{0.1, 0.3, 0.5\}$ shifts and test it on the domain with 0.6 shift. 

\begin{table}[ht]
\caption{The effect of different components on performance.}
\def\arraystretch{1.2}
\label{tab:table3}
\vskip 0.15in
\begin{center}
\begin{small}
\begin{sc}
\begin{tabular}{lcc}
 \textbf{Models} & \textbf{ADE} & \textbf{FDE} \\
 \toprule
Vanila GCRL & 0.0871 & 0.1047  \\

No coupling layers & 0.0772 & 0.0916  \\

Z only & 0.1054 & 0.1347  \\

S only & 0.2188 & 0.2418  \\
\bottomrule
\end{tabular}
\end{sc}
\end{small}
\end{center}
\vskip -0.1in
\end{table}

As shown in Table \ref{tab:table3}, the model with only $Z$ performs on par with the default model, however, the performance deteriorates when using only $S$. It can be concluded that $Z$ is invariant across domains as expected, however, it lacks information, which leads to worse ADE. The model without coupling layers performs better than the default model for synthetic dataset, which indicates that fewer parameters would suffice for certain environments.

\vspace{-1.0em}
\section{Conclusions}
\vspace{-0.6em}
We propose a method that leverages causality to learn meaningful features that can increase the robustness and transferability of deep learning models. In presence of spurious correlation, we demonstrated the robustness of our method while other human trajectory prediction models performed poorly compared to our method. Furthermore, our augmented causal model was able to enhance the transferability in a zero-shot and low-shot settings. It can be concluded from our results that incorporating causality in deep learning is a promising research direction towards robustness and explainability of deep learning models. 

\bibliographystyle{unsrtnat}
\bibliography{references}  

\begin{thebibliography}{36}
\providecommand{\natexlab}[1]{#1}
\providecommand{\url}[1]{\texttt{#1}}
\expandafter\ifx\csname urlstyle\endcsname\relax
  \providecommand{\doi}[1]{doi: #1}\else
  \providecommand{\doi}{doi: \begingroup \urlstyle{rm}\Url}\fi

\bibitem[Kothari et~al.(2022)Kothari, Kreiss, and Alahi]{motionsurvey}
Parth Kothari, Sven Kreiss, and Alexandre Alahi.
\newblock Human trajectory forecasting in crowds: A deep learning perspective.
\newblock \emph{IEEE Transactions on Intelligent Transportation Systems},
  23\penalty0 (7):\penalty0 7386--7400, 2022.
\newblock \doi{10.1109/TITS.2021.3069362}.

\bibitem[Gupta et~al.(2018)Gupta, Johnson, Fei-Fei, Savarese, and
  Alahi]{SocialGAN}
Agrim Gupta, Justin Johnson, Li~Fei-Fei, Silvio Savarese, and Alexandre Alahi.
\newblock Social gan: Socially acceptable trajectories with generative
  adversarial networks.
\newblock In \emph{2018 IEEE/CVF Conference on Computer Vision and Pattern
  Recognition}, pages 2255--2264, 2018.
\newblock \doi{10.1109/CVPR.2018.00240}.

\bibitem[Kosaraju et~al.(2019)Kosaraju, Sadeghian, Mart\'{\i}n-Mart\'{\i}n,
  Reid, Rezatofighi, and Savarese]{Social-BiGAT}
Vineet Kosaraju, Amir Sadeghian, Roberto Mart\'{\i}n-Mart\'{\i}n, Ian Reid,
  Hamid Rezatofighi, and Silvio Savarese.
\newblock Social-bigat: Multimodal trajectory forecasting using bicycle-gan and
  graph attention networks.
\newblock In \emph{Advances in Neural Information Processing Systems},
  volume~32. Curran Associates, Inc., 2019.

\bibitem[Mangalam et~al.(2020)Mangalam, Girase, Agarwal, Lee, Adeli, Malik, and
  Gaidon]{PECNet}
Karttikeya Mangalam, Harshayu Girase, Shreyas Agarwal, Kuan-Hui Lee, Ehsan
  Adeli, Jitendra Malik, and Adrien Gaidon.
\newblock It is not the journey but the destination: Endpoint conditioned
  trajectory prediction.
\newblock In \emph{Computer Vision – ECCV 2020: 16th European Conference,
  Glasgow, UK, August 23–28, 2020, Proceedings, Part II}, page 759–776,
  Berlin, Heidelberg, 2020. Springer-Verlag.
\newblock ISBN 978-3-030-58535-8.
\newblock \doi{10.1007/978-3-030-58536-5_45}.
\newblock URL \url{https://doi.org/10.1007/978-3-030-58536-5_45}.

\bibitem[Lee et~al.(2017)Lee, Choi, Vernaza, Choy, Torr, and
  Chandraker]{DESIRE}
Namhoon Lee, Wongun Choi, Paul Vernaza, Christopher~B. Choy, Philip H.~S. Torr,
  and Manmohan Chandraker.
\newblock Desire: Distant future prediction in dynamic scenes with interacting
  agents.
\newblock In \emph{2017 IEEE Conference on Computer Vision and Pattern
  Recognition (CVPR)}, pages 2165--2174, 2017.
\newblock \doi{10.1109/CVPR.2017.233}.

\bibitem[Huang et~al.(2019)Huang, Bi, Li, Mao, and Wang]{STGAT}
Yingfan Huang, Huikun Bi, Zhaoxin Li, Tianlu Mao, and Zhaoqi Wang.
\newblock Stgat: Modeling spatial-temporal interactions for human trajectory
  prediction.
\newblock In \emph{2019 IEEE/CVF International Conference on Computer Vision
  (ICCV)}, pages 6271--6280, 2019.
\newblock \doi{10.1109/ICCV.2019.00637}.

\bibitem[Mohamed et~al.(2020)Mohamed, Qian, Elhoseiny, and Claudel]{STGCNN}
Abduallah Mohamed, Kun Qian, Mohamed Elhoseiny, and Christian Claudel.
\newblock Social-stgcnn: A social spatio-temporal graph convolutional neural
  network for human trajectory prediction.
\newblock In \emph{2020 IEEE/CVF Conference on Computer Vision and Pattern
  Recognition (CVPR)}, pages 14412--14420, 2020.
\newblock \doi{10.1109/CVPR42600.2020.01443}.

\bibitem[Salzmann et~al.(2020)Salzmann, Ivanovic, Chakravarty, and
  Pavone]{Trajectron++}
Tim Salzmann, Boris Ivanovic, Punarjay Chakravarty, and Marco Pavone.
\newblock Trajectron++: Dynamically-feasible trajectory forecasting with
  heterogeneous data.
\newblock In \emph{Computer Vision – ECCV 2020: 16th European Conference,
  Glasgow, UK, August 23–28, 2020, Proceedings, Part XVIII}, page 683–700,
  Berlin, Heidelberg, 2020. Springer-Verlag.
\newblock ISBN 978-3-030-58522-8.
\newblock \doi{10.1007/978-3-030-58523-5_40}.
\newblock URL \url{https://doi.org/10.1007/978-3-030-58523-5_40}.

\bibitem[Liu et~al.(2022)Liu, Cadei, Schweizer, Bahmani, and
  Alahi]{causalmotion}
Yuejiang Liu, Riccardo Cadei, Jonas Schweizer, Sherwin Bahmani, and Alexandre
  Alahi.
\newblock Towards robust and adaptive motion forecasting: {A} causal
  representation perspective.
\newblock In \emph{{IEEE/CVF} Conference on Computer Vision and Pattern
  Recognition, {CVPR} 2022, New Orleans, LA, USA, June 18-24, 2022}, pages
  17060--17071. {IEEE}, 2022.
\newblock \doi{10.1109/CVPR52688.2022.01657}.

\bibitem[Liu et~al.(2021)Liu, Sun, Wang, Tang, Li, Qin, Chen, and Liu]{csg}
Chang Liu, Xinwei Sun, Jindong Wang, Haoyue Tang, Tao Li, Tao Qin, Wei Chen,
  and Tie-Yan Liu.
\newblock Learning causal semantic representation for out-of-distribution
  prediction.
\newblock In \emph{Advances in Neural Information Processing Systems},
  volume~34, pages 6155--6170. Curran Associates, Inc., 2021.
\newblock URL
  \url{https://proceedings.neurips.cc/paper/2021/file/310614fca8fb8e5491295336298c340f-Paper.pdf}.

\bibitem[Zhang et~al.(2020{\natexlab{a}})Zhang, Zhang, Tang, Hua, and
  Sun]{CONTA}
Dong Zhang, Hanwang Zhang, Jinhui Tang, Xian-Sheng Hua, and Qianru Sun.
\newblock Causal intervention for weakly-supervised semantic segmentation.
\newblock In \emph{Advances in Neural Information Processing Systems},
  volume~33, pages 655--666. Curran Associates, Inc., 2020{\natexlab{a}}.
\newblock URL
  \url{https://proceedings.neurips.cc/paper/2020/file/07211688a0869d995947a8fb11b215d6-Paper.pdf}.

\bibitem[Schölkopf et~al.(2021)Schölkopf, Locatello, Bauer, Ke, Kalchbrenner,
  Goyal, and Bengio]{causal_survey}
Bernhard Schölkopf, Francesco Locatello, Stefan Bauer, Nan~Rosemary Ke, Nal
  Kalchbrenner, Anirudh Goyal, and Yoshua Bengio.
\newblock Toward causal representation learning.
\newblock \emph{Proceedings of the IEEE}, 109\penalty0 (5):\penalty0 612--634,
  2021.
\newblock \doi{10.1109/JPROC.2021.3058954}.

\bibitem[Pearl et~al.(2016)Pearl, Glymour, and P.~Jewell]{causaljudea}
Judea Pearl, Madelyn Glymour, and Nicholas P.~Jewell.
\newblock \emph{Causal inference in statistics: A primer}.
\newblock John Wiley and Sons, 2016.

\bibitem[Suter et~al.(2019)Suter, Miladinovic, Sch{\"o}lkopf, and
  Bauer]{DisentangleR}
Raphael Suter, Djordje Miladinovic, Bernhard Sch{\"o}lkopf, and Stefan Bauer.
\newblock Robustly disentangled causal mechanisms: Validating deep
  representations for interventional robustness.
\newblock In \emph{Proceedings of the 36th International Conference on Machine
  Learning}, volume~97 of \emph{Proceedings of Machine Learning Research},
  pages 6056--6065. PMLR, 2019.

\bibitem[Lerner et~al.(2007)Lerner, Chrysanthou, and Lischinski]{eth-ucy1}
Alon Lerner, Yiorgos Chrysanthou, and Dani Lischinski.
\newblock Crowds by example.
\newblock \emph{Computer Graphics Forum}, 26\penalty0 (3):\penalty0 655--664,
  2007.
\newblock \doi{https://doi.org/10.1111/j.1467-8659.2007.01089.x}.
\newblock URL
  \url{https://onlinelibrary.wiley.com/doi/abs/10.1111/j.1467-8659.2007.01089.x}.

\bibitem[Pellegrini et~al.(2010)Pellegrini, Ess, and Van~Gool]{eth-ucy2}
Stefano Pellegrini, Andreas Ess, and Luc Van~Gool.
\newblock Improving data association by joint modeling of pedestrian
  trajectories and groupings.
\newblock In \emph{Computer Vision -- ECCV 2010}, pages 452--465, Berlin,
  Heidelberg, 2010. Springer Berlin Heidelberg.
\newblock ISBN 978-3-642-15549-9.

\bibitem[Pearl and Bareinboim(2014)]{Transportability}
Judea Pearl and Elias Bareinboim.
\newblock {External Validity: From Do-Calculus to Transportability Across
  Populations}.
\newblock \emph{Statistical Science}, 29\penalty0 (4):\penalty0 579 -- 595,
  2014.
\newblock \doi{10.1214/14-STS486}.

\bibitem[Arjovsky et~al.(2019)Arjovsky, Bottou, Gulrajani, and Lopez-Paz]{irm}
Martin Arjovsky, Léon Bottou, Ishaan Gulrajani, and David Lopez-Paz.
\newblock Invariant risk minimization, 2019.
\newblock URL \url{https://arxiv.org/abs/1907.02893}.

\bibitem[Teshima et~al.(2020)Teshima, Sato, and Sugiyama]{prop}
Takeshi Teshima, Issei Sato, and Masashi Sugiyama.
\newblock Few-shot domain adaptation by causal mechanism transfer.
\newblock In \emph{Proceedings of the 37th International Conference on Machine
  Learning}, volume 119 of \emph{Proceedings of Machine Learning Research},
  pages 9458--9469. PMLR, 2020.
\newblock URL \url{https://proceedings.mlr.press/v119/teshima20a.html}.

\bibitem[Zhang et~al.(2020{\natexlab{b}})Zhang, Gong, Stojanov, Huang, Liu, and
  Glymour]{infer}
Kun Zhang, Mingming Gong, Petar Stojanov, Biwei Huang, Qingsong Liu, and Clark
  Glymour.
\newblock Domain adaptation as a problem of inference on graphical models.
\newblock In \emph{Proceedings of the 34th International Conference on Neural
  Information Processing Systems}. Curran Associates Inc., 2020{\natexlab{b}}.

\bibitem[Yang et~al.(2021)Yang, Yu, Cao, Liu, Wang, and Li]{CAE}
Shuai Yang, Kui Yu, Fuyuan Cao, Lin Liu, Hao Wang, and Jiuyong Li.
\newblock Learning causal representations for robust domain adaptation.
\newblock \emph{IEEE Transactions on Knowledge and Data Engineering}, pages
  1--1, 2021.
\newblock \doi{10.1109/TKDE.2021.3119185}.

\bibitem[Yu et~al.(2020)Yu, Guo, Liu, Li, Wang, Ling, and Wu]{markovblanket}
Kui Yu, Xianjie Guo, Lin Liu, Jiuyong Li, Hao Wang, Zhaolong Ling, and Xindong
  Wu.
\newblock Causality-based feature selection: Methods and evaluations.
\newblock \emph{ACM Comput. Surv.}, 53\penalty0 (5), sep 2020.
\newblock ISSN 0360-0300.
\newblock \doi{10.1145/3409382}.
\newblock URL \url{https://doi.org/10.1145/3409382}.

\bibitem[Magliacane et~al.(2018)Magliacane, van Ommen, Claassen, Bongers,
  Versteeg, and Mooij]{InvariantCD}
Sara Magliacane, Thijs van Ommen, Tom Claassen, Stephan Bongers, Philip
  Versteeg, and Joris~M Mooij.
\newblock Domain adaptation by using causal inference to predict invariant
  conditional distributions.
\newblock In \emph{Advances in Neural Information Processing Systems},
  volume~31. Curran Associates, Inc., 2018.

\bibitem[Goudet et~al.(2017)Goudet, Kalainathan, Caillou, Guyon, Lopez-Paz, and
  Sebag]{CGNN}
Olivier Goudet, Diviyan Kalainathan, Philippe Caillou, Isabelle Guyon, David
  Lopez-Paz, and Michèle Sebag.
\newblock Causal generative neural networks, 2017.
\newblock URL \url{https://arxiv.org/abs/1711.08936}.

\bibitem[Yu et~al.(2019)Yu, Chen, Gao, and Yu]{DAG-GNN}
Yue Yu, Jie Chen, Tian Gao, and Mo~Yu.
\newblock {DAG}-{GNN}: {DAG} structure learning with graph neural networks.
\newblock In \emph{Proceedings of the 36th International Conference on Machine
  Learning}, volume~97 of \emph{Proceedings of Machine Learning Research},
  pages 7154--7163. PMLR, 2019.

\bibitem[Ke et~al.(2019)Ke, Bilaniuk, Goyal, Bauer, Larochelle, Schölkopf,
  Mozer, Pal, and Bengio]{unknownI}
Nan~Rosemary Ke, Olexa Bilaniuk, Anirudh Goyal, Stefan Bauer, Hugo Larochelle,
  Bernhard Schölkopf, Michael~C. Mozer, Chris Pal, and Yoshua Bengio.
\newblock Learning neural causal models from unknown interventions, 2019.

\bibitem[Locatello et~al.(2019)Locatello, Bauer, Lucic, Raetsch, Gelly,
  Sch{\"o}lkopf, and Bachem]{impossibledisentangle}
Francesco Locatello, Stefan Bauer, Mario Lucic, Gunnar Raetsch, Sylvain Gelly,
  Bernhard Sch{\"o}lkopf, and Olivier Bachem.
\newblock Challenging common assumptions in the unsupervised learning of
  disentangled representations.
\newblock In \emph{Proceedings of the 36th International Conference on Machine
  Learning}, volume~97 of \emph{Proceedings of Machine Learning Research},
  pages 4114--4124. PMLR, 09--15 Jun 2019.

\bibitem[Chen et~al.(2021)Chen, Li, Lu, and Zhou]{counterfactualmotion}
Guangyi Chen, Junlong Li, Jiwen Lu, and Jie Zhou.
\newblock Human trajectory prediction via counterfactual analysis.
\newblock In \emph{2021 IEEE/CVF International Conference on Computer Vision
  (ICCV)}, pages 9804--9813, 2021.
\newblock \doi{10.1109/ICCV48922.2021.00968}.

\bibitem[Goodfellow et~al.(2016)Goodfellow, Bengio, and
  Courville]{Goodfellowdeeplearning}
Ian Goodfellow, Yoshua Bengio, and Aaron Courville.
\newblock \emph{Deep Learning}.
\newblock MIT Press, 2016.
\newblock \url{http://www.deeplearningbook.org}.

\bibitem[Kingma and Welling(2014)]{repar}
Diederik~P. Kingma and Max Welling.
\newblock Auto-encoding variational bayes.
\newblock In \emph{2nd International Conference on Learning Representations,
  {ICLR} 2014, Banff, AB, Canada, April 14-16, 2014, Conference Track
  Proceedings}, 2014.

\bibitem[Jiang et~al.(2017)Jiang, Zheng, Tan, Tang, and Zhou]{VaDE}
Zhuxi Jiang, Yin Zheng, Huachun Tan, Bangsheng Tang, and Hanning Zhou.
\newblock Variational deep embedding: An unsupervised and generative approach
  to clustering.
\newblock In \emph{Proceedings of the Twenty-Sixth International Joint
  Conference on Artificial Intelligence, {IJCAI} 2017, Melbourne, Australia,
  August 19-25, 2017}, pages 1965--1972. ijcai.org, 2017.
\newblock \doi{10.24963/ijcai.2017/273}.

\bibitem[Johnson et~al.(2016)Johnson, Duvenaud, Wiltschko, Adams, and
  Datta]{SVAE}
Matthew~J. Johnson, David Duvenaud, Alexander~B. Wiltschko, Ryan~P. Adams, and
  Sandeep~R. Datta.
\newblock Composing graphical models with neural networks for structured
  representations and fast inference.
\newblock In \emph{Advances in Neural Information Processing Systems 29: Annual
  Conference on Neural Information Processing Systems 2016, December 5-10,
  2016, Barcelona, Spain}, pages 2946--2954, 2016.

\bibitem[Kivva et~al.(2022)Kivva, Rajendran, Ravikumar, and
  Aragam]{vae-identifiability}
Bohdan Kivva, Goutham Rajendran, Pradeep Ravikumar, and Bryon Aragam.
\newblock Identifiability of deep generative models without auxiliary
  information, 2022.
\newblock URL \url{https://arxiv.org/abs/2206.10044}.

\bibitem[Dinh et~al.(2017)Dinh, Sohl{-}Dickstein, and Bengio]{rnvp}
Laurent Dinh, Jascha Sohl{-}Dickstein, and Samy Bengio.
\newblock Density estimation using real {NVP}.
\newblock In \emph{5th International Conference on Learning Representations,
  {ICLR} 2017, Toulon, France, April 24-26, 2017, Conference Track
  Proceedings}. OpenReview.net, 2017.

\bibitem[Xu et~al.(2019)Xu, Chen, Lai, Li, Zhao, and Pei]{learning_prior}
Haowen Xu, Wenxiao Chen, Jinlin Lai, Zhihan Li, Youjian Zhao, and Dan Pei.
\newblock On the necessity and effectiveness of learning the prior of
  variational auto-encoder, 2019.
\newblock URL \url{https://arxiv.org/abs/1905.13452}.

\bibitem[Chen et~al.(2019)Chen, Liu, Kreiss, and Alahi]{circle_cross}
Changan Chen, Yuejiang Liu, Sven Kreiss, and Alexandre Alahi.
\newblock Crowd-robot interaction: Crowd-aware robot navigation with
  attention-based deep reinforcement learning.
\newblock In \emph{International Conference on Robotics and Automation, {ICRA}
  2019, Montreal, QC, Canada, May 20-24, 2019}, pages 6015--6022. {IEEE}, 2019.
\newblock \doi{10.1109/ICRA.2019.8794134}.

\end{thebibliography}

\newpage
\appendix
\onecolumn
\section{Methodology details}
\label{appA}
\paragraph{Lower bound on the log likelihood of past trajectories} The objective function in GCRL is given in Equation \ref{eq4}. Since the future trajectories are unknown at test time, the approximate posterior $q(s,z,e|x,y)$ is intractable. Therefore, we rewrite the Equation \ref{eq4} such that the approximate posterior is conditioned on only x:
\begin{equation}
\label{eq:proof_twoterms}
\begin{split}
   &\underset{p, q}{max} \: E_{p^*(x,y)}\left[E_{q(s,z, e|x,y)}\left[\log \frac{p(x,y,s,z,e)}{q(s,z,e|x,y)}\right]\right]= \\
   &\underset{p, q}{max} \: E_{p^*(x,y)}\left[E_{q(s,z, e|x,y)}\left[\log \frac{p(x,y,s,z,e)q(y|x)}{q(s,z,e,y|x)}\right]\right]= \\
   & \underset{p, q}{max} \: E_{p^*(x,y)}\left[\int \left[\log \frac{p(x,y,s,z,e)q(y|x)}{q(s,z,e,y|x)}\right]q(s,z, e|x,y)dsdzde\right] = \\
   & \underset{p, q}{max} \: E_{p^*(x,y)}\left[\int \left[\log \frac{p(x,y,s,z,e)}{q(s,z,e,y|x)}\right]q(s,z, e|x,y)dsdzde + \int \left[ \log q(y|x) \right] q(s,z, e|x,y)dsdzde \right] = \\
   & \underset{p, q}{max} \: E_{p^*(x,y)}\left[\int \left[\log \frac{p(x,y,s,z,e)}{q(s,z,e,y|x)}\right]q(s,z, e|x,y)dsdzde + \log q(y|x)  \right] = \\
   & \underset{p, q}{max} \: E_{p^*(x,y)}\left[\int \left[\log \frac{p(x,y,s,z,e)}{q(s,z,e,y|x)}\right]\frac{q(s,z, e,y|x)}{q(y|x)}dsdzde + \log q(y|x)  \right] = \\
   & \underset{p, q}{max} \: E_{p^*(x,y)}\left[\int \left[\frac{1}{q(y|x)}\log \frac{p(x,y,s,z,e)}{q(s,z,e,y|x)}\right]q(s,z, e,y|x)dsdzde + \log q(y|x)  \right] = \\
   & \underset{p, q}{max} \: \int \left[\frac{1}{q(y|x)}\log \frac{p(x,y,s,z,e)}{q(s,z,e,y|x)}\right]p^*(x,y)q(s,z, e,y|x)dsdzdedxdy + \int \left[\log q(y|x) \right] p^*(x,y)dxdy  = \\
   & \underset{p, q}{max} \: \int \left[\frac{1}{q(y|x)}\log \frac{p(x,y,s,z,e)}{q(s,z,e,y|x)}\right]p^*(y|x)p^*(x)q(s,z, e,y|x)dsdzdedxdy + \int \left[\log q(y|x) \right] p^*(y|x)p^*(x)dxdy  = \\
   & \underset{p, q}{max} \: \int \left[\frac{p^*(y|x)}{q(y|x)}\log \frac{p(x,y,s,z,e)}{q(s,z,e,y|x)}\right]q(s,z, e,y|x)p^*(x)dsdzdedxdy + \int \left[\log q(y|x) \right] p^*(y|x)p^*(x)dxdy  = \\
   & \underset{p, q}{max} \: \int E_{q(s,z, e,y|x)}\left[\frac{p^*(y|x)}{q(y|x)}\log \frac{p(x,y,s,z,e)}{q(s,z,e,y|x)}\right]p^*(x)dx + \int E_{p^*(y|x)}\left[\log q(y|x) \right] p^*(x)dx = \\
   &\underset{p, q}{max} \: E_{p^*(x)}E_{p^*(y|x)}[\log q(y|x)] + 
   E_{p^*(x)}[E_{q(s,z,e,y|x)}[\frac{p^*(y|x)}{q(y|x)}\log \frac{p(x,y,s,z,e)}{q(s,z,y,e|x)}]]
\end{split}
\end{equation}

which gives us the Equation \ref{eq:twoterms}. According to Equation \ref{eq:proof_twoterms}, $q(y|x) $ will be derived towards $p^*(y|x)$ as desired and the objective function will become:

\begin{equation}
\label{eq:proof_lowerbound}
\begin{split}
   &\underset{p, q}{max} \: E_{p^*(x)}[E_{q(s,z,e,y|x)}[\frac{p^*(y|x)}{p^*(y|x)}\log \frac{p(x,y,s,z,e)}{q(s,z,y,e|x)}]]= \\
   &\underset{p, q}{max} \: E_{p^*(x)}[E_{q(s,z,e,y|x)}[\log \frac{p(x,y,s,z,e)}{q(s,z,y,e|x)}]]= \\
   &\underset{p, q}{max} \: E_{p^*(x)}[\int [\log \frac{p(x,y,s,z,e)} {q(s,z,y,e|x)}]q(s,z,e,y|x)dsdzdedy]= \\
   &\underset{p, q}{max} \: E_{p^*(x)}[\int [\log \frac{p(y|s,z,x)p(x|s,z)p(z)p(s|e)p(e)}{q(y|s,z,x)q(z|x)q(s|x)q(e|x)}]q(y|s,z,x)q(z|x)q(s|x)q(e|x)dsdzdedy]= \\
   &\underset{p, q}{max} \: E_{p^*(x)}[\int [\log \frac{p(x|s,z)p(z)p(s|e)p(e)}{q(z|x)q(s|x)q(e|x)}]p(y|s,z,x)q(z|x)q(s|x)q(e|x)dsdzdedy]= \\
   &\underset{p, q}{max} \: E_{p^*(x)}[\int [\log \frac{p(x|s,z)p(z)p(s|e)p(e)}{q(z|x)q(s|x)q(e|x)}]q(z|x)q(s|x)q(e|x)dsdzde]= \\
   &\underset{p, q}{max} \: E_{p^*(x)}[\int [\log \frac{p(x|s,z)p(z)p(s)p(e|s)}{q(z|x)q(s|x)q(e|x)}]q(z|x)q(s|x)q(e|x)dsdzde]= \\
   &\underset{p, q}{max} \: E_{p^*(x)}[E_{q(s,z|x)} [\log p(x|s,z)] + E_{q(z|x)} [\log \frac{p(z)}{q(z|x)}] + E_{q(s|x)} [\log \frac{p(s)}{q(s|x)}] + E_{q(e|x)} [\log \frac{p(e|s)}{q(e|x)}]] \leq \\
   &\underset{p, q}{max} \: E_{p^*(x)}[E_{q(s,z|x)} [\log p(x|s,z)]] \leq \\
   &\underset{p, q}{max} \: E_{p^*(x)}[\log E_{q(s,z|x)} [p(x|s,z)]] = E_{p^*(x)}[\log p(x)]
\end{split}
\end{equation}

which is evidently the lower bound of $\log p(x)$. $KL$ denotes the KL-divergence. Since the natural logarithm is a concave function, the second inequality follows from Jensen's inequality. The third equality in Equation \ref{eq:proof_lowerbound} follows from the Causal Markov condition that $q(s,z,e,y|x)=q(y|x,s,z)q(z|x)q(s|x,e)q(e|x)$ and $p(x,s,z,y,e)=p(y|x,s,z)p(x|s,z)p(s|e)p(z)p(e)$. Furthermore, we are using mean-field variational inference in which latent variables are assumed to be independent i.e., $q(s|e,x)=q(s|x)$. The last equality holds because eventually $q(s|x)=p(s)$ and $q(z|x)=p(z)$.
\paragraph{GCRL objective function} According to Equation \ref{eq:proof_lowerbound}, eventually, $q(e|x)$ will be derived towards $p(e|s)$. Therefore, it is sensible to assume that $q(e|x)=p(e|s)$. This is useful because it helps us to obtain a GMM prior and avoid modelling of the approximate posterior of E. This can be shown via Bayes rule and replacing :

\begin{equation}
\label{eq:bayes}
    q(e|x)=p(e|s)=\frac{p(s|e)p(e)}{p(s)}=\frac{p(s|e)p(e)}{\sum_e p(s|e)p(e)}
\end{equation}

Since we assume $p(e)$ and $p(s|e)$ are known, we can avoid modelling $q(e|x)$. By replacing $q(e|x)$ in Equation \ref{eq:twoterms} and using the Causal Markov condition, we can obtain a GMM prior for S:

\begin{equation}
\label{eq:proof_twoterms}
\begin{split}
   &\underset{p, q}{max} \: E_{p^*(x)}E_{p^*(y|x)}[\log q(y|x)] + 
   E_{p^*(x)}[E_{q(s,z,e,y|x)}[\frac{p^*(y|x)}{q(y|x)}\log \frac{p(x,y,s,z,e)}{q(s,z,y,e|x)}]] = \\
   &\underset{p, q}{max} \: E_{p^*(x,y)}[\log q(y|x) + E_{q(s,z,e|x)}[\frac{p(y|s,z,x)}{q(y|x)}\log \frac{p(y|s,z,x)p(x|s,z)p(z)p(s|e)p(e)}{q(y|s,z,x)q(z|x)q(s|x)q(e|x)}]] = \\
   &\underset{p, q}{max} \: E_{p^*(x,y)}[\log q(y|x) + \frac{1}{q(y|x)}E_{q(s,z,e|x)}[p(y|s,z,x)\log \frac{p(x|s,z)p(z)p(s|e)p(e)}{q(z|x)q(s|x)q(e|x)}]] = \\
   &\underset{p, q}{max} \: E_{p^*(x,y)}[\log q(y|x) + \frac{1}{q(y|x)}E_{q(s,z,e|x)}[p(y|s,z,x)\log \frac{p(x|s,z)p(z)p(s|e)p(e)p(s)}{q(z|x)q(s|x)p(s|e)p(e)}]] = \\
   &\underset{p, q}{max} \: E_{p^*(x,y)}[\log q(y|x) + \frac{1}{q(y|x)}E_{q(s,z,e|x)}[p(y|s,z,x)\log \frac{p(x|s,z)p(z)p(s)}{q(z|x)q(s|x)}]] 
\end{split}
\end{equation}
where $p(s)=\sum_e p(s|e)p(e)$ that is a GMM prior for S. $q(y|x)$ can be written as:

\begin{equation}
\label{eq:obtainq}
    q(y|x) = \int q(y|x,s,z)q(s,z|x)dsdz = \int p(y|x,s,z)q(s|x)q(z|x)dsdz = p(y|do(x)) = E_{q(s|x), q(z|x)}[p(y|x,s,z)]
\end{equation}

where $p(y|do(x))$ is the causal effect of X on Y and can be calculated by ancestral sampling. We have obtained Equation \ref{eq:finalloss} which is the objective function of GCRL. 

\section{Experiment}
In this section, we present the experimental details and more experimental results achieved when evaluating our framework. Following our proposed method in Section \ref{sec:proposed_method} of the paper, our final loss function is formulated by Equation \ref{eq:finalloss}.

\subsection{Experiment details}
The list of all hyperparameters used by our model and their corresponding settings applied when conducting our experiments are represented in Tables \ref{tab:setbench} and \ref{tab:setsyn}.

Following Tables \ref{tab:setbench} and \ref{tab:setsyn} and Equation \ref{eq:finalloss}, $S_{Dim}$ and $Z_{Dim}$ show the dimensions of the latent representation space for $S$ and $Z$ variables, respectively. 
$N^s_{q(y|x)}$ and $N^s_{E_{q(s|x),q(z|x)}}$ show the number of sampling from $S$ and $Z$ distributions for calculating the expectations of $q(y|x)$ and $E_{q(s|x),q(z|x)}$, respectively.

$R_{Input}$ shows the reconstructed input that is the \textit{relative distance} of the pedestrians from their starting locations, $d_{rel}$, which are used in our experiments conducted by the ETH-UCY dataset for a fair comparison with the state-of-the-art methods. Our hyperparameter settings ruuning experiment with ETH-UCY dataset are represented in Table \ref{tab:setbench}. Our model is trained for 300 epochs in experiments conducted with the ETH-UCY dataset.

\begin{table*}[h]
\centering
\caption{Detailed hyperparameter settings of the experiments with ETH-UCY dataset.}
\def\arraystretch{1.5}
\label{tab:setbench}
\begin{tabular}{ p{3cm}p{1.5cm}p{3cm}p{3.2cm}  }
 \textbf{Parameter} & \textbf{Setting} & \textbf{Parameter} & \textbf{Setting}\\
 \toprule
 $S_{Dim}$             & 8     & Learning Rate   & Scheduler (one-cycle) \\
 $Z_{Dim}$             & 8     & Optimizer      & Adam\\
 $N^s_{E_{q(y|x)}}$    & 10    & $n_{cluster}$ (GMM)                 & 5\\
 $N^s_{E_{q(s|x),q(z|x)}}$     & 10   & $R_{Input}$       & $d_{rel}$ \\
 \bottomrule
\end{tabular}
\end{table*}

\textbf{Note} if $N^s_{q(y|x)}=1$, then in the second term of the loss function $p(y|x,s,z)$ cancels out $q(y|x)$, therefore, all components of the second term of the loss function will not include the predicted future trajectories $y$. Considering this, for the experiments where  $N^s_{q(y|x)}=1$, we use a hyperparameter $N$, which is defined as the number of generated trajectories per sample, to improve the performance. In this case, the term $log\, q(y|x)$ in Equation \ref{eq:finalloss} becomes the variety loss \cite{SocialGAN}. We do this because our approach to causal representation learning is \textit{generative}, and therefore, we can benefit from its generative aspect to improve the performance of our model. $N$ is set to 20 during training and 100 during evaluation in experiments with synthetic dataset. Hyperparameter settings for experiments with synthetic dataset are represented in Table \ref{tab:setsyn}. The \textit{absolute locations} of the pedestrians $l_{abs}$ are used in our experiments with the synthetic dataset for a fair comparison with IM. 

We trained our method GCRL in the experiments with synthetic dataset for 250 epochs. We fine-tuned the trained models for the domain adaptation task for 100 epochs. IM has a training paradigm conducted in 4 phases with the total number of 470 epochs followed by 300 epochs for finetuning. On the other hand, our proposed method offers an end-to-end learning process which converges with much less number of epochs both for training and fine-tuning tasks.

\begin{table}[h]
\centering
\caption{Detailed hyperparameter settings of the experiments with Synthetic dataset.}
\def\arraystretch{1.5}
\begin{tabular}{ p{3cm}p{2cm} p{3cm}p{1.3cm}  }
 \textbf{Parameter} & \textbf{Setting} & \textbf{Parameter} & \textbf{Setting}\\
 \toprule
 $S_{Dim}$                     & 2    & Learning Rate                        & $5 \times 10^-3$ \\
 $Z_{Dim}$                     & 2    & Optimizer                            & Adam\\
 $N^s_{E_{q(y|x)}}$            & 1    & $n_{cluster}$ (GMM)         & 5\\
 $N^s_{E_{q(s|x),q(z|x)}}$     & 10   &  $R_{Input}$                     &  $l_{abs}$  \\
 \bottomrule
\end{tabular}\label{tab:setsyn}
\end{table}

\newpage
\subsection{Additional Experiments}
\subsubsection{Motion forecasting with ETH-UCY}
We present motion forecasting results of all environments of the ETH-UCY dataset; 'ETH', 'HOTEL', 'UNIV', 'ZARA1' and 'ZARA2'. These experiments are conducted without the controlled spurious correlations. Table \ref{tab:allenvs} presents a comparison between performances of GCRL and the base model STGAT \cite{STGAT} with similar settings.

\begin{table}[h]
	\centering
	\caption{Motion forecasting results on different domains of ETH-UCY}
	\def\arraystretch{1.6}
	\label{tab:allenvs}
	\begin{tabular}{lcccccc}
		\textbf{Model} & \multicolumn{6}{c}{\textbf{ADE/FDE}}\\
		\toprule 
		& ETH & HOTEL & UNIV & ZARA1 & ZARA2 & \textbf{AVG} \\
		\toprule
		STGAT &   0.8/1.53 & 0.52/1.08 & 0.51/1.12 & 0.39/0.87 & 0.3/0.64 & 0.5/1.05 \\
		GCRL  &   0.97/1.9 &    0.55/1.14       &     0.51/1.13      &     0.38/0.84      &    0.3/0.67      &       0.54/1.1   \\
		\bottomrule
	\end{tabular}
\end{table}

Figures \ref{fig:mdleftup} and \ref{fig:mdrightup} show the ADE while training and validating our model on the 'hotel', 'univ', 'zara1', and 'zara2' environments of the ETH-UCY dataset for 300 epochs, respectively. The trained model is tested on the 'eth' environment at inference. Figures \ref{fig:mdleftdown} and \ref{fig:mdrightdown}, on the other hand, show the FDE for training and validation, respectively. The convergence of prediction and reconstruction losses during training are shown in Figures \ref{fig:mdleftup2} and \ref{fig:mdrightup2},  respectively. Large values of reconstruction and prediction losses are due to the constant terms added to $p(x|,s,z)$ and $p(y|x,s,z)$ such as the log of determinant of covariance matrices. The regularization loss of $S$ and $Z$ distributions are also represented by Figures \ref{fig:mdleftdown2} and \ref{fig:mdrightdown2}, respectively. As shown in the Figures, the regularization losses converge to zero showing that the model drives the posterior distributions of $S$ and $Z$ towards their priors.

\begin{figure}[!h]
	\centering
	\subfloat[Training]{\label{fig:mdleftup}{\includegraphics[width=0.3\textwidth]{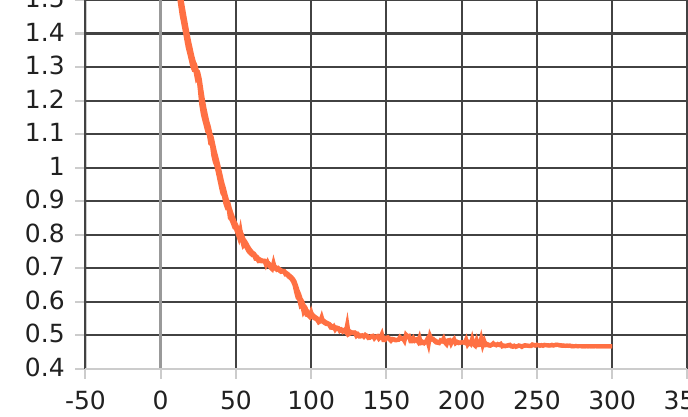}}}
	%\hfill
	\subfloat[Validation]{\label{fig:mdrightup}{\includegraphics[width=0.3\textwidth]{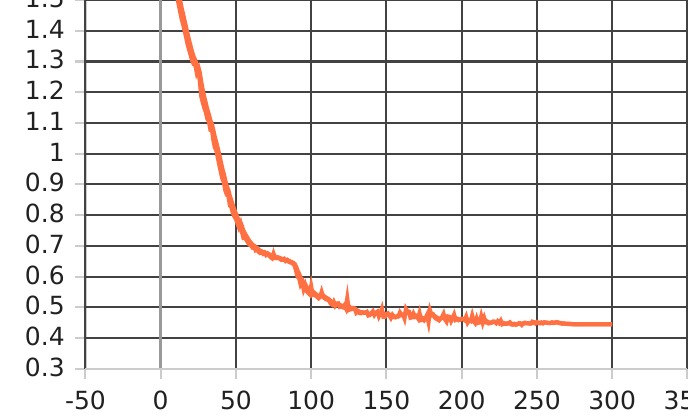}}}
	\caption{ADE}
	\subfloat[Training]{\label{fig:mdleftdown}{\includegraphics[width=0.3\textwidth]{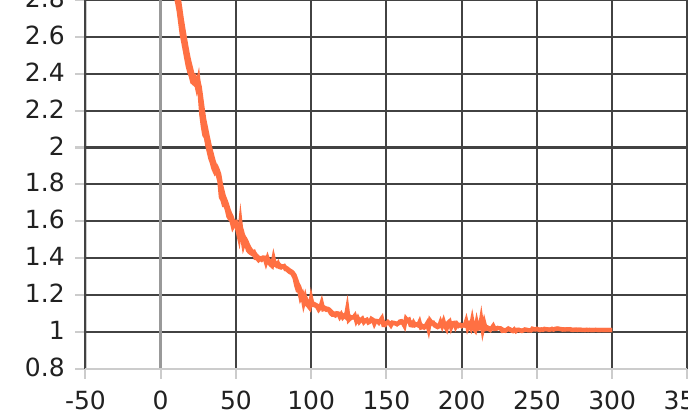}}}
	%\hfill
	\subfloat[Validation]{\label{fig:mdrightdown}{\includegraphics[width=0.3\textwidth]{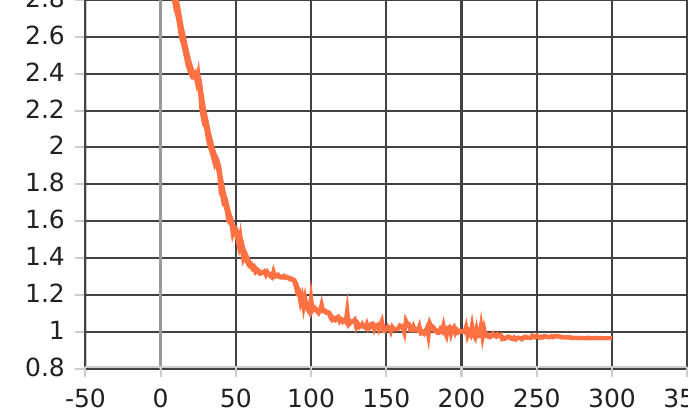}}}
	\caption{FDE}
	\label{fig:subfigures}
\end{figure}

\begin{figure}[!h]
	\centering
	\subfloat[Prediction Loss]{\label{fig:mdleftup2}{\includegraphics[width=0.3\textwidth]{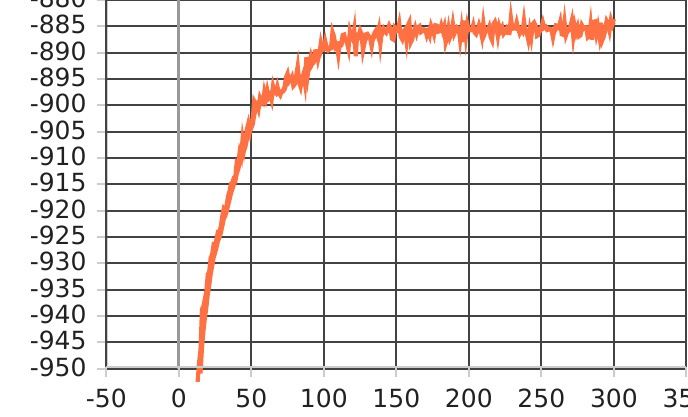}}}
	%\hfill
	\subfloat[Reconstruction Loss]{\label{fig:mdrightup2}{\includegraphics[width=0.3\textwidth]{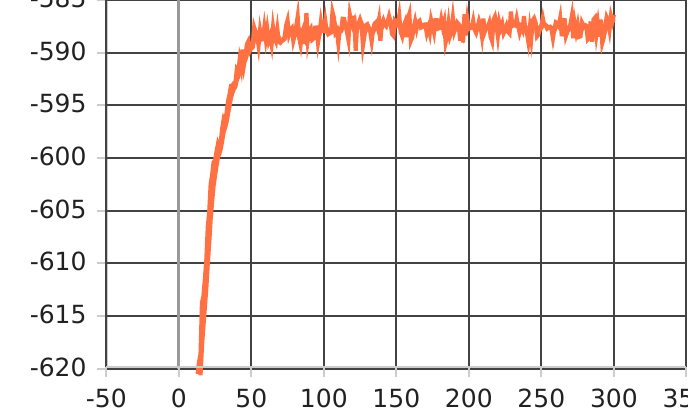}}}
	\caption{Predictions}
	\subfloat[$S$ Regularization Loss]{\label{fig:mdleftdown2}{\includegraphics[width=0.3\textwidth]{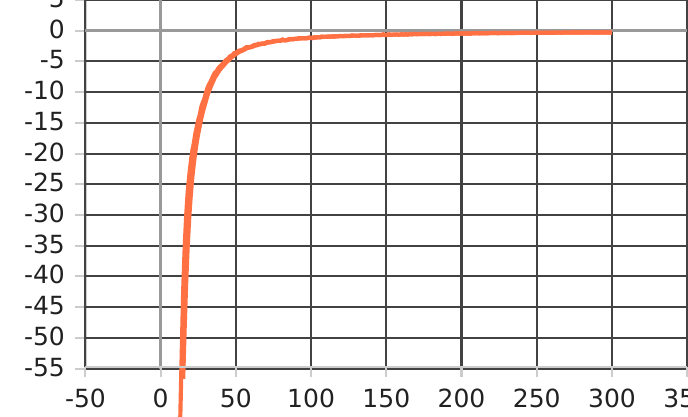}}}
	%\hfill
	\subfloat[$Z$ Regularization Loss]{\label{fig:mdrightdown2}{\includegraphics[width=0.3\textwidth]{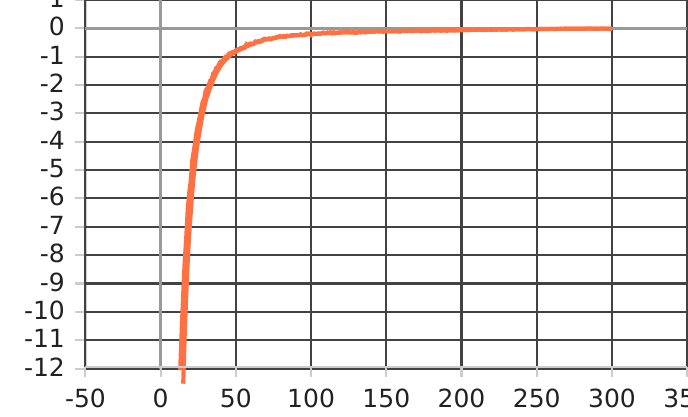}}}
	\caption{Regularizations}
	\label{fig:subfigures}
\end{figure}

\subsubsection{Ablation study on ETH-UCY}
In Table \ref{tab:abl_bench}, the results for Ablation studies are presented. We used different settings to evaluate our method in these experiments including: removing coupling layers, changing the dimensions of $S$ and $Z$ latent representations, and applying different number of clusters for GMM used to model prior of $S$. 

\begin{table}[!h]
	\centering
	\caption{The effect of different components on performance when experimenting with the ETH-UCY dataset.}
	\def\arraystretch{1.4}
	\label{tab:abl_bench}
	\vskip 0.15in
	\begin{center}
		\begin{small}
			\begin{sc}
				\begin{tabular}{lcc}
					%\toprule
					\textbf{Models} & \textbf{ADE} & \textbf{FDE} \\
					\toprule
					Vanila GCRL &  0.9728 & 1.8696  \\
					
					No coupling layers & 0.9881 &  1.8869 \\
					
					latent dimension:2 & 0.9736 & 1.8459  \\
					
					Number of GMM clusters:2 & 0.9732 & 1.9534  \\
					
					Number of GMM clusters:7 &  0.9775 & 1.9169   \\
					\bottomrule
				\end{tabular}
			\end{sc}
		\end{small}
	\end{center}
	\vskip -0.1in
\end{table}
%%%%%%%%%%%%%%%%%%%%%%%%%%%%%%%%%%%%%%%%%%%%%%%%%%%%%%%%%%%%%%%%%%%%%%%%%%%%%%%
%%%%%%%%%%%%%%%%%%%%%%%%%%%%%%%%%%%%%%%%%%%%%%%%%%%%%%%%%%%%%%%%%%%%%%%%%%%%%%%

\subsubsection{Qualitative results} Figure \ref{fig:multi_vis} visualizes the capability of GCRL in generating multiple trajectories per pedestrian. 

\begin{figure}[!h]
	\centering
	\subfloat[Scene 1]{\includegraphics[width=0.5\textwidth]{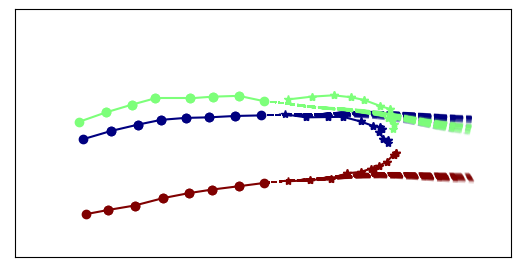}}
	\subfloat[Scene 2]{\includegraphics[width=0.5\textwidth]{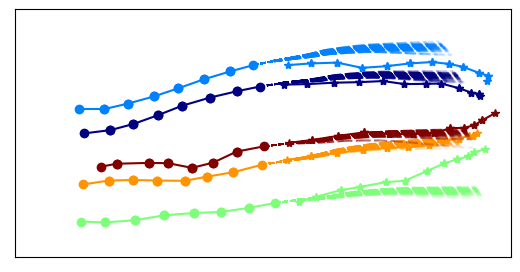}}
	\hfill
	\subfloat[Scene 3]{\includegraphics[width=0.5\textwidth]{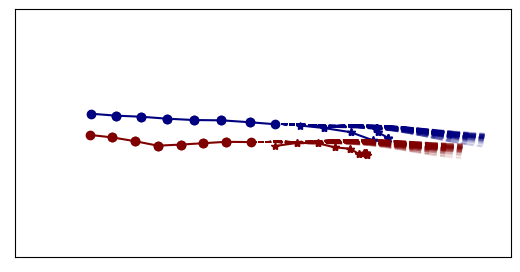}}
	\subfloat[Scene 4]{\includegraphics[width=0.5\textwidth]{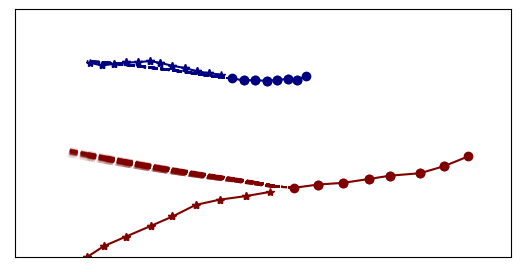}}
	\caption{Trajectories of pedestrians in different scenes. The circled trajectories are observed, the dashed trajectories are predicted, and the starred trajectories are groundtruth. In all of the scenes, 100 trajectories are predicted for each pedestrian.}
	\label{fig:multi_vis}
\end{figure}

\subsubsection{Changing multiple factors simultaneously} In this experiment, we change the environments in the ETH-UCY dataset and noise intensities simultaneously. The results are shown in Table \ref{tab:envnoise}.

\begin{table}[!h]
	\centering
	\caption{GCRL performance in cases with multiple changing factors.}
	\def\arraystretch{1.4}
	\label{tab:envnoise}
	\begin{tabular}{lcccccc}
		\centering
		\textbf{Environment} & \multicolumn{4}{c}{\textbf{ADE/FDE}}\\
		\toprule 
		& $\alpha=8$ & $\alpha=16$ & $\alpha=32$ & $\alpha=64$  \\
		\toprule
		ETH &   0.97/1.8 & 0.97/1.8 & 0.97/1.8 & 0.97/1.8  \\
		
		HOTEL  & 0.64/1.3  & 0.64/1.3  & 0.64/1.3   &  0.64/1.3   \\
		
		UNIV  & 0.53/1.15  & 0.53/1.15  & 0.53/1.15   &  0.53/1.15   \\
		
		ZARA1  & 0.4/0.86  & 0.4/0.86  & 0.4/0.86   &  0.4/0.86   \\
		
		ZARA2  & 0.31/0.66 & 0.31/0.66 & 0.31/0.66  &  0.31/0.66  \\
		\bottomrule
	\end{tabular}
\end{table}

\end{document}